\documentclass[accepted]{uai2023} %

\makeatletter
\def\blfootnote{\gdef\@thefnmark{}\@footnotetext}
\makeatother

\usepackage[american]{babel}
\usepackage{svg, amsfonts}

\usepackage{subcaption}
\usepackage[titlenumbered,ruled,resetcount,noend]{algorithm2e}
\usepackage{algpseudocode}
\algdef{SE}[SUBALG]{Indent}{EndIndent}{}{\algorithmicend\ }%
\algtext*{Indent}
\algtext*{EndIndent}

\usepackage{natbib} %
    \bibliographystyle{plainnat}
    
\usepackage{mathtools} %

\usepackage{ebproof}
\usepackage{amsmath}
\usepackage{amsthm}
\usepackage{amssymb}
\usepackage{stmaryrd}

\usepackage{booktabs} %
\usepackage{tikz} %
\usepackage{wrapfig}
\usepackage{float}
\usepackage{caption}
\usepackage{subcaption}
\usepackage{graphicx}
\usepackage{listings}
\lstset{frame=tb,
  language=Caml,
  aboveskip=3.5mm,
  belowskip=0mm,
  showstringspaces=false,
  columns=flexible,
  basicstyle={\small\ttfamily},
  numberstyle=\tiny\color{black},
  keywordstyle=\color{black},
  commentstyle=\color{dkgreen},
  stringstyle=\color{mauve},
  breaklines=true,
  tabsize=3,
  numbers=left,
  xleftmargin=2em,
  framexleftmargin=1.5em,
  escapeinside={<@}{@>}
}

\usetikzlibrary{patterns,calc,backgrounds}

\newtheorem{theorem}{Theorem}
\newtheorem{lemma}[theorem]{Lemma}
\newtheorem{proposition}[theorem]{Proposition}

\theoremstyle{definition}
\newtheorem{definition}{Definition}

\tikzstyle{nnf}=[
  >=stealth,font=\small,auto,scale=0.7,every node/.style={scale=0.7}
]
\tikzstyle{extnode}=[
  draw,circle,inner sep=2pt,fill=white
]

\tikzstyle{leafnode}=[
  draw,fill=gray!20,inner sep=3.5pt
]
\tikzstyle{constnode}=[
  draw,fill=white,inner sep=3.5pt
]
\tikzstyle{label}=[
  fill=white,inner sep=2.5pt
]

\tikzstyle{acarrow}=[
    decoration={markings,mark=at position 1 with {\arrow[scale=0.6]{>}}},
    postaction={decorate},
    shorten >=0.4pt,
    >=latex,
    line width=0.1
]

\tikzstyle{bnarrow}=[
    decoration={markings,mark=at position 1 with {\arrow[scale=1.5]{>}}},
    postaction={decorate},
    shorten >=0.7pt,
    >=latex,
    line width=0.3
]
\tikzstyle{bayesnet}=[
  >=latex, thick, auto
]
\tikzstyle{bnnode}=[
  draw,ellipse,minimum size=7mm,inner sep=1pt,font=\small
]
\tikzstyle{cpt}=[
  font=\footnotesize
]

\tikzstyle{graph}=[
  >=stealth,font=\small,auto,scale=1,every node/.style={scale=1}
]
\tikzstyle{node}=[
  draw,circle,inner sep=3pt,fill=white
]

\tikzstyle{bdd}=[
  >=latex, thick, >=stealth, font=\small,auto,scale=0.9,every node/.style={scale=0.9}
]
\tikzstyle{bddnode}=[
  draw,circle,inner sep=0pt,fill=white,minimum size=5.5mm
]

\tikzstyle{highedge}=[
    line width=0.9
]
\tikzstyle{lowedge}=[
    line width=0.9,dotted
]
\tikzstyle{bddterminal}=[
  draw,fill=gray!20,inner sep=2.5pt, font=\small
]

\tikzstyle{bddroot}=[
  draw,fill=white,inner sep=2.5pt, font=\small
]

\tikzstyle{bdd}=[
  >=latex, thick, >=stealth, font=\small,auto,scale=0.9,every node/.style={scale=0.9,line width=0.5}
]
\tikzstyle{bddnode}=[
  draw,circle,inner sep=1.5pt,fill=white,minimum size=5.5mm,font=\small
]
\tikzstyle{root}=[
    draw, rectangle, thin, inner sep=2pt, minimum size=5.5mm,font=\small
]

\tikzstyle{edge}=[
    ->, filled
]
\tikzstyle{highedge}=[
    line width=0.9
]
\tikzstyle{lowedge}=[
    line width=0.9,dotted
]
\tikzstyle{bddterminal}=[
  draw,fill=gray!20,inner sep=3.5pt, font=\small
]

\lstdefinestyle{compact}{
  \ttfamily\tiny
}

\usepackage{forest}
\usepackage{tikz}
\usetikzlibrary{shapes,arrows}
\usetikzlibrary{arrows.meta}
\usetikzlibrary{positioning}
\tikzset{
  low/.style={densely dashed, high},
  high/.style={-{Stealth[]}},
}
\forestset{
  BDT/.style={
    for tree={
      {rectangle, minimum size=1.9em, inner sep = 2em},
      if n children=0{rectangle, minimum size=1.9 em}{circle, inner sep=0.2em},%
      draw,%
      edge={
        high,%
      },
    }
  },
}

\newcommand{\true}[0]{ \mathtt{T} }
\newcommand{\false}[0]{ \mathtt{F} }
\definecolor{amethyst}{rgb}{0.6, 0.4, 0.8}

\newcommand{\op}[2]{\operatorname{#1}\mathopen{}\left(#2\right)\mathclose{}}
\newcommand{\opsub}[3]{\operatorname{#1}_{#2}\mathopen{}\left(#3\right)\mathclose{}}
\newcommand{\low}[1]{\op{low}{#1}}
\newcommand{\high}[1]{\op{high}{#1}}
\newcommand{\lsb}[2]{\opsub{LSB}{#1}{#2}}

\newcommand{\ftobdd}[1]{\left\llbracket#1\right\rrbracket}
\newcommand{\bddrt}[2]{{#1}_{#2}(\vec{x})}

\newcommand{\itesty}[1] {#1}
\newcommand{\ite}[3]{ \text{\itesty{if} \quad } #1 \text{\quad \itesty{then} \quad} #2 \text{\quad \itesty{else} \quad} #3 }
\newcommand{\itep}[3]{\left(\ite{#1}{#2}{#3}\right)}

\newcommand{\mhl}[1]{#1}

\newcommand{\BWH}{BITWISE\_INT}

\newcommand{\concat}{^\frown}

\newenvironment{propositionhalf}[1]
  {\innerpropositionhalf}
  {\endinnerpropositionhalf}

\newcommand{\sh}[1]{}

\DeclareMathOperator\flip{flip}
\DeclareMathOperator\Beta{Beta}
\DeclareMathOperator\uniform{uniform}
\usepackage{pifont}
\newcommand{\xmark}{\ding{55}}
\usepackage{pgfplots}
\usetikzlibrary{plotmarks}
\pgfplotstableread[col sep = comma]{data/lessthan.csv}\lessthandata
\pgfplotstableread[col sep = comma]{data/equals.csv}\equalsdata
\pgfplotstableread[col sep = comma]{data/plus.csv}\plusdata
\pgfplotstableread[col sep = comma]{data/luhn.csv}\luhndata

\title{Scaling Integer Arithmetic in Probabilistic Programs}

\author[1]{William X. Cao}
\author[1]{Poorva Garg$^*$}
\author[2]{Ryan Tjoa$^*$}
\author[3]{Steven Holtzen}
\author[1]{Todd Millstein}
\author[1]{Guy Van den Broeck}
\affil[1]{%
    Department of Computer Science\\
    University of California\\
    Los Angeles, California, USA
}
\affil[2]{
    Department of Computer Science\\
    University of Washington\\
    Seattle, Washington, USA
}
\affil[3]{%
    Khoury College of Computer Sciences\\
    Northeastern University\\
    Boston, Massachusetts, USA
}

\begin{document}
\maketitle

\blfootnote{$^*$These authors contributed equally to this work.}

\begin{abstract}
Distributions on integers are ubiquitous in probabilistic modeling but remain challenging for many of today's probabilistic programming languages (PPLs). The core challenge comes from discrete structure: many of today's PPL inference strategies rely on enumeration, sampling, or differentiation in order to scale, which fail for high-dimensional complex discrete distributions involving integers. 
Our insight is that there is structure in arithmetic that these approaches are not using. We present a binary encoding strategy for discrete distributions that exploits the rich logical structure of integer operations like summation and comparison. We leverage this structured encoding with knowledge compilation to perform exact probabilistic inference, and show that this approach scales to much larger integer distributions with~arithmetic. 
\end{abstract}

\begin{figure*}
\centering
\begin{minipage}[b]{.59\textwidth}
\begin{lstlisting}
id = [discrete([0.72, 0.01, 0.01, 0.01, 0.01, 
                     0.01, 0.2, 0.01, 0.01, 0.01]),...,
        discrete([0.01, 0.01, 0.05, 0.01, 0.01,
                     0.63, 0.2, 0.01, 0.01, 0.05])]
check_digit = id[0]
remaining_id = id[1:] //tail of the array
check_val = luhn_checksum(remaining_id)
observe((check_digit + (check_val %
return id
\end{lstlisting}
\caption{A probabilistic program for the student ID probabilistic inference problem using integer random variables (discrete), integer arithmetic (the Luhn algorithm function), and Bayesian conditioning (observe)}\label{example2} 
\end{minipage}
\quad
\begin{minipage}[b]{.37\textwidth}
\begin{lstlisting}[language=Python]
def luhn_checksum(id)
  sum = 0
  for i in 0..length(id) - 1
    if i %
      if id[i] > 4:
        sum += 2 * id[i] - 9
      else:
        sum += 2 * id[i] 
    else: 
        sum += id[i]
  return sum 
\end{lstlisting}
\caption{Luhn algorithm implementation}\label{example1}
\end{minipage}
\end{figure*}

\section{Introduction}\label{sec:intro}
Probabilistic programming languages (PPLs) are expressive languages for defining
probability distributions. The core idea of a PPL is to enrich a programming 
language with the ability to define, observe, and compute with random variables: hence, the program itself defines a probabilistic model.
This paper focuses on a particular programming feature: scaling inference for programs with random integers and integer arithmetic. Integers are very challenging for today's approaches to probabilistic inference.
The relationships between integer-valued random variables can be very complex: they can be added,
multiplied, compared, etc. This rich structure is opaque to today's inference strategies. Trace-based sampling like Markov-Chain Monte Carlo, importance sampling, and sequential Monte-Carlo all collapse integer distributions
to a single sampled point \citep{gelman2015stan, bingham2019pyro, dillon2017tensorflow,van2018introduction,lew2019trace}. These approximate inference strategies can scale well in many cases, but they struggle to find valid sampling regions in the presence of low-probability observations and non-differentiability (e.g., 
observing the sum of two large random integers to be a constant)~\citep{gelman2015stan, bingham2019pyro, dillon2017tensorflow}. 
Exact inference strategies work by preserving the global structure of the distribution, but here there is a challenge: \emph{what is the right strategy for efficiently representing and manipulating distributions on integers}?
Today's PPLs that support exact inference and integer manipulation -- such as {Dice}~\citep{HoltzenOOPSLA20}, {ProbLog}~\citep{de2007problog}, Psi~\citep{gehr2016psi}, and WebPPL~\citep{dippl} -- model integer distributions using what is essentially a one-hot categorical encoding (i.e., an integer distribution $[\texttt{0} \mapsto 0.25, \texttt{1} \mapsto 0.25, \texttt{2} \mapsto 0.25, \texttt{3} \mapsto 0.25]$ is represented simply as a vector). This encoding style is not capable of exploiting the structure of addition: adding two random variables effectively requires full enumeration.

Our first contribution is a new representation of distributions on integers as distributions on \emph{binary encodings}.
For instance, in the above example, rather than representing the distribution as an exhaustive map from integer values to probabilities, we represent it as a joint distribution on binary bits
$
    [\texttt{00} \mapsto 0.25, \texttt{01} \mapsto 0.25, \texttt{10} \mapsto 0.25, \texttt{11} \mapsto 0.25].
$
The upsides of this seemingly-equivalent representation are twofold. First, we can more efficiently represent the joint distribution itself when it has certain structure. In this case, because the distribution is uniform, we can represent it as a product of two independent Bernoulli distributions, one for each bit: we will show that the ability to factorize the distribution in this manner leads to significant performance improvements. Second, this binary representation reveals the structure of arithmetic: for instance, we can compare two integers by \emph{independently} comparing each of their binary digits and aggregating the results. 

Clearly a binary representation of integers reveals structure, but how can we automatically find and exploit this structure during inference in a PPL?
As our second contribution, we show that two of today's PPLs -- Dice~\citep{HoltzenOOPSLA20} and ProbLog \citep{de2007problog,Fierens2015} -- are already capable of exploiting this structure if it is properly encoded into the program, by virtue of their \emph{knowledge compilation} approach to inference.
We give a lightweight strategy for encoding integer distributions, and show empirically that when using our new binary-encoded distributions these two languages scale to significantly larger and more complex integer distributions without essential modifications to their existing inference~strategies.

As our third contribution we show that scalable support for random integer arithmetic allows us to push the boundaries of discrete probabilistic programming systems in surprising ways. We demonstrate how to model a Beta distribution, a continuous distribution, using probabilistic integers. This modelling method exploits the conjugacy property of the Beta distribution, through which we can always characterize the distribution through its (integral) sufficient statistics. By doing so, we can use the Beta distribution as a prior for Bayesian learning.

The structure of this paper is as follows: Section \ref{sec:motivation} gives a motivating example for integer arithmetic. Section \ref{sec:math} explains our integer representation and explores how common integer operations on this representation have structure exploitable by knowledge compilation. Section \ref{section:experiments} empirically evaluates our representation strategy against existing PPLs. Section \ref{sec:betabern} explores the representation of a continuous Beta prior with random integers. Sections \ref{sec:related} and \ref{sec:conclusion} discuss related work and conclude respectively.  

\section{Motivation}\label{sec:motivation}
We begin with a motivating example highlighting how integer distributions are used in probabilistic programs.
Consider the following probabilistic model based on student ID numbers. Suppose that an optical character recognition system is attempting to parse a handwritten student ID number. For each digit of the ID, it produces a probability distribution representing its beliefs about what the digit could be. Combining this output, we get a probability distribution over all possible student IDs. 

The Luhn algorithm~\citep{luhn} is a commonly used method of validating various ID numbers including student IDs. Given a starting ID such as 70733428, the algorithm provides for a way to compute a sum over the ID, giving us a check digit (4) which is then prepended to the original ID to get a final ID: 470733428. This ID is the one actually issued to a student; when provided with an ID, we can validate it by recomputing the sum and looking at the check digit. 

We wish to use the fact that the student ID can be validated to additionally inform our single-digit distributions from the OCR system. We can implement this as a probabilistic program like the one in Figure \ref{example1}.  Figure \ref{example1} implements a function \verb#luhn_checksum# that takes as input a list representing the digits of the student ID, excluding the check digit. It then does computation according to the Luhn algorithm to compute a sum over the digits, which is then returned. Figure \ref{example2} then uses this function: we create a list \verb#id# which contains distributions over integers derived from the OCR system. The syntax \verb#discrete(v)# for a vector $v = [p_0, .., p_n]$ creates a distribution over the numbers $0, .., n$, in which the number $i$ has the probability $p_i$, and is used in our program to represent said OCR distributions. We call the \verb#luhn_checksum# function on these integer distributions to get a distribution over checksums and condition using the \verb#observe# keyword in line 9 to get an updated distribution over IDs.

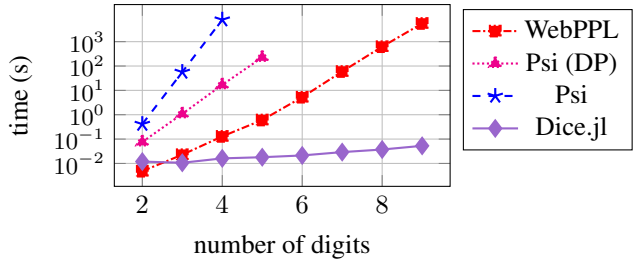
\begin{figure}[t]
    \centering
    \begin{tikzpicture}
    \begin{axis}[
        height=4cm,
        width=6cm,
        grid=major,
        xmode=linear,
        ymode=log,
        xtick={2,4,6,8,10},
        ytick={0.001, 0.01, 0.1, 1, 10, 100, 1000},
        yticklabel style = {font=\footnotesize},
        xlabel={number of digits},
        ylabel={time (s)},
        legend columns=1,
        legend style={at={(12em,4em)},anchor=north west},
        yminorticks=false,
    ]
    \addplot[mark=square*, thick, red, densely dashdotted, mark options={scale=1}] table [x index={0}, y index={1}] {\luhndata}; %
    \addlegendentry{WebPPL};
    \addplot[mark=triangle*, thick, magenta, densely dotted, mark options={scale=1.3}] table [x index={0}, y index={2}] {\luhndata}; %
    \addlegendentry{Psi (DP)};
    \addplot[mark=star, thick, blue, dashed, 
        mark options={scale=2}] table [x index={0}, y index={3}] {\luhndata}; %
    \addlegendentry{Psi};
    \addplot[mark=diamond*, thick, amethyst, 
        mark options={scale=1.5}] table [x index={0}, y index={4}] {\luhndata}; %
    \addlegendentry{{Dice.jl}};
    \end{axis}
  \end{tikzpicture}
    \caption{Single-marginal performance for ID example on increasing ID lengths. WebPPL and Psi scale exponentially due to having to enumerate all paths.}
    \label{luhn_scaling_plot}
\end{figure}

If we implement this program in today's probabilistic programming languages, we will run into a problem. Even if we only wish to compute the marginal probability over a single digit of the ID, Figure \ref{luhn_scaling_plot} shows that the runtime will scale exponentially in the number of digits in the student ID. Each additional digit will contribute a multiplicative amount to the number of total possible ID instantiations, meaning that any approach involving enumeration is inherently exponential. In practice, this means that programs containing student IDs of a realistic length (9-10 digits) will not run. 

The fact that such straightforward programs fail to scale on existing probabilistic programming systems is the primary motivation behind our work.  We have implemented our encoding of integer distributions in \verb#Dice.jl#, a discrete PPL embedded in Julia that uses the
same knowledge compilation approach as Dice~\citep{HoltzenOOPSLA20}. Figure \ref{luhn_scaling_plot} shows that our technique allows inference for such programs to scale for a larger, more realistic number of digits.

\section{Representing \& Manipulating Integer Distributions}\label{sec:math}

This section describes the key technical details behind a binary encoding approach and explains how such an encoding allows the knowledge compilation inference strategy used by Dice and ProbLog to automatically exploit arithmetic structure. We first provide a brief introduction to inference via knowledge compilation. We then demonstrate and analyze various approaches to constructing distributions over integers within probabilistic programs. Finally, we show how the binary encoding can be leveraged by knowledge compilation to identify and exploit contextual independencies for inference over distributions with integer arithmetic.

\subsection{Integer Distributions via BDDs}

\begin{figure}
\centering
\begin{subfigure}[b]{\linewidth}
     \centering
    \begin{tikzpicture}
      \def\lvl{20pt}

    \node (f1) at (0, 0) [bddnode] {$f_1$};
    \node (f2) at ($(f1) + (-7bp, -\lvl)$) [bddnode] {$f_2$};
    \node (f3) at ($(f2) + (-7bp, -\lvl)$) [bddnode] {$f_3$};
    \node (f4) at ($(f3) + (-0bp, -\lvl)$) [bddnode] {$f_4$};

    \node[draw, rounded corners] at ($(f4) + (-30bp, 60bp)$)  {.1};
    \node[draw, rounded corners] at ($(f4) + (-30bp, 40bp)$)  {.11};
    \node[draw, rounded corners] at ($(f4) + (-30bp, 20bp)$)  {.25};
    \node[draw, rounded corners] at ($(f4) + (-30bp, 0bp)$)  {.5};

    \node (false) at ($(f4) + (30bp, -1*\lvl)$) [bddterminal] {$\false$};

    \node (true) at ($(f4) + (90bp, -1*\lvl)$) [bddterminal] {$\true$};

    \node (f6) at ($(f1) + (20bp, 0bp)$) [bddnode] {$f_1$};
    \node (f7) at ($(f6) + (10bp, -\lvl)$) [bddnode] {$f_2$};
    \node (f8) at ($(f7) + (20bp, -\lvl)$) [bddnode] {$f_3$};
    \node (f9) at ($(f8) + (45bp, -\lvl)$) [bddnode] {$f_4$};

    \node (f10) at ($(f6) + (35bp, 0bp)$) [bddnode] {$f_{1}$};
    \node (f11) at ($(f10) + (40bp, -\lvl)$) [bddnode] {$f_{2}$};
    \node (f12) at ($(f11) + (-20bp, -\lvl)$) [bddnode] {$f_{3}$};

    \node (b2) at ($(f1) + (0bp, 20bp)$) [bddroot] {$b_2$};
    \node (b1) at ($(f6) + (0bp, 20bp)$) [bddroot] {$b_1$};
    \node (b0) at ($(f10) + (0bp, 20bp)$) [bddroot] {$b_0$};

    \begin{scope}[on background layer]
      \draw [highedge] (f1) -- (false);
      \draw [lowedge] (f1) -- (f2);
      \draw [highedge] (f2) -- (false);
      \draw [lowedge] (f2) -- (f3);
      \draw [highedge] (f3) -- (false);
      \draw [lowedge] (f3) -- (f4);

      \draw [highedge] (f4) -- (false);
      \draw [lowedge] (f4) -- (true);
      \draw [highedge] (f6) -- (false);
      \draw [lowedge] (f6) -- (f7);
      \draw [highedge] (f7) -- (false);
      \draw [lowedge] (f7) -- (f8);
      \draw [highedge] (f8) -- (true);
      \draw [lowedge] (f8) -- (f9);
      \draw [highedge] (f9) -- (true);
      \draw [lowedge] (f9) -- (false);

      \draw [highedge] (f10) -- (false);
      \draw [lowedge] (f10) -- (f11);
      \draw [highedge] (f11) -- (true);
      \draw [lowedge] (f11) -- (f12);
      \draw [highedge] (f12) -- (false);
      \draw [lowedge] (f12) -- (f9);

      \draw[-stealth] ($(f1) + (0bp, 20bp)$) -- (f1);
      \draw[-stealth] ($(f6) + (0bp, 20bp)$) -- (f6);
      \draw[-stealth] ($(f10) + (0bp, 20bp)$) -- (f10);

    \end{scope}
    \end{tikzpicture}
    \caption{
      Multi-rooted BDD for a CATEG\_INT encoded integer.
    }
    \label{fig:sbk_bdd}
  \end{subfigure}
\vfill

\begin{subfigure}[b]{\linewidth}
     \centering
    \begin{tikzpicture}
      \def\lvl{20pt}

    \node (f1) at (0, 0) [bddnode] {$f_1$};
    \node (f2) at ($(f1) + (40bp, 0bp)$) [bddnode] {$f_1$};
    \node (f3) at ($(f2) + (-15bp, -20bp)$) [bddnode] {$f_2$};
    \node (f4) at ($(f2) + (20bp, 0bp)$) [bddnode] {$f_1$};
    \node (f5) at ($(f4) + (20bp, -20bp)$) [bddnode] {$f_2$};
    \node (f6) at ($(f5) + (-5bp, -40bp)$) [bddnode] {$f_4$};
    \node (f7) at ($(f5) + (-15bp, -20bp)$) [bddnode] {$f_3$};

    \node (false) at ($(f7) + (-20bp, -40bp)$) [bddterminal] {$\false$};
    \node (true) at ($(false) + (-45bp, 00bp)$) [bddterminal] {$\true$};

    \node (b1) at ($(f1) + (0bp, 20bp)$) [bddroot] {$b_2$};
    \node (b2) at ($(f2) + (0bp, 20bp)$) [bddroot] {$b_1$};
    \node (b3) at ($(f4) + (0bp, 20bp)$) [bddroot] {$b_0$};

    \node[draw, rounded corners] at ($(f1) + (-30bp, 0bp)$)  {.3};
    \node[draw, rounded corners] at ($(f1) + (-30bp, -20bp)$)  {.714};
    \node[draw, rounded corners] at ($(f1) + (-30bp, -40bp)$)  {.5};
    \node[draw, rounded corners] at ($(f1) + (-30bp, -60bp)$)  {.6};

    \begin{scope}[on background layer]
      \draw [highedge] (f1) -- (true);
      \draw [lowedge] (f1) -- (false);
      \draw [highedge] (f2) -- (false);
      \draw [lowedge] (f2) -- (f3);
      \draw [highedge] (f3) -- (true);
      \draw [lowedge] (f3) -- (false);

      \draw [highedge] (f4) -- (false);
      \draw [highedge] (f5) -- (f6);
      \draw [lowedge] (f4) -- (f5);
      \draw [lowedge] (f5) -- (f7);
      \draw [highedge] (f6) -- (true);
      \draw [lowedge] (f6) -- (false);
      \draw [highedge] (f7) -- (true);
      \draw [lowedge] (f7) -- (false);

      \draw[-stealth] (b1) -- (f1);
      \draw[-stealth] (b2) -- (f2);
      \draw[-stealth] (b3) -- (f4);

    \end{scope}

    \end{tikzpicture}
    \caption{
      Multi-rooted BDD for an BITWISE\_INT encoded integer.
    }
    \label{fig:bwh_bdd}
  \end{subfigure}
\caption{BDDs representing the integer distribution $[\texttt{0} \mapsto 0.1, \texttt{1} \mapsto 0.1, \texttt{2} \mapsto 0.2, \texttt{3} \mapsto 0.3, \texttt{4} \mapsto 0.3]$ resulting from CATEG\_INT and BITWISE\_INT encoding methods. BITWISE\_INT achieves a smaller BDD by compactly representing higher order bits.
}\label{bddexample}
\end{figure}
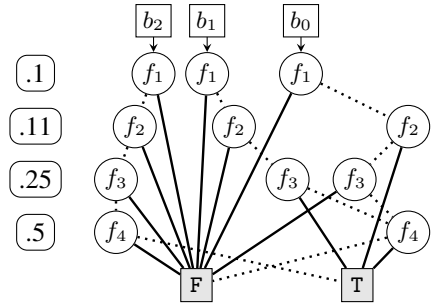
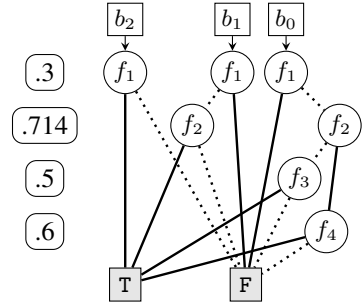

Thus far we have seen how binary-represented distributions expose structure and can enable effective scaling in practice. In this section we explain exactly how this performance improvement is achieved during inference. 
In particular, we show how \emph{knowledge compilation} is capable of automatically finding and exploiting the structure of integer distributions and operations. Inference via knowledge compilation is currently the state-of-the-art approach to exact discrete probabilistic inference in certain classes of probabilistic programs~\citep{HoltzenOOPSLA20, de2007problog, chavira2006compiling, Chavira2008, Fierens2015}. 
The heart of inference via knowledge compilation is a reduction from inference to \emph{weighted model counting} (WMC). Let $\varphi$ be a Boolean formula and $w$ be a map from literals in $\varphi$ to real-valued weights; the pair $(\varphi, w)$ is called a \emph{weighted Boolean formula}. Then, the \emph{weighted model count} $\mathtt{WMC}(\varphi, w)$ is a weighted sum of models of $\varphi$:

\begin{align}
    \mathtt{WMC}(\varphi, w) = \sum_{m \models \varphi} \prod_{\ell \in m} w(\ell).
\end{align}
This reduction to WMC is not useful on its own however: the WMC task is \#P-hard for an arbitrary Boolean formula $\varphi$. This is where knowledge compilation comes into the picture: $\varphi$ is compiled into a data structure that supports efficient weighted model counting in the size of the data structure. A common example of such a knowledge-compilation data structure is a binary decision diagram (BDD), which supports linear-time WMC, but there are many others~\citep{Darwiche_2002}.
PPLs like Dice and ProbLog work by compiling a program into a BDD or related compilation target and thereby reducing probabilistic program inference to WMC on that target~\citep{chavira2005compiling,sang2005performing}. 

The cost of the knowledge compilation approach to inference is almost entirely determined by the structure of the program; the more structure that exists, the more compact the resulting BDD or related data structure can be.
This leads to our core contribution: a new logical representation of integer distributions that 
is amenable to efficient compilation into BDDs. 
To demonstrate how BDDs can encode integer distributions,
Figure~\ref{bddexample} shows two different multi-rooted BDDs that represent \emph{the same distribution} on 
integers.
In both cases the roots in each BDD represent random variables for each binary digit: $b_0$ is the 0-order digit,
$b_1$ is the 1-order digit. 
Weights of each positive literal are shown on the left (with the negative literal weight being 1 minus 
the positive literal weight); dotted edges represent a false assignment and solid edges represent true assignment.
Intuitively, a binary representation has the potential to be more compact than a naive categorical
representation due to the reduction in the number of roots: for instance, Dice~\citep{HoltzenOOPSLA20} 
requires one root for each possible integer value.

As an example of how to use these data structures, consider computing the marginal probability of the high-order bit $b_2$ being true.  This is 
$\mathtt{WMC}(b_2, w)$, which is $0.3$ -- that is clear in Figure~\ref{fig:bwh_bdd} since the sole path from $b_2$ to the true node has weight 0.3, but it is also true for the sole path from $b_2$ to the true node in Figure~\ref{fig:sbk_bdd}, which has weight $(0.9 * 0.89 * 0.75 * 0.5) \approx 0.3$.
In general, to compute the probability of an arbitrary integer, we convert it into binary and 
conjoin the appropriate roots: for instance, to compute the probability of the integer \texttt{0}, 
we compute $\mathtt{WMC}(\overline{b_0} \land \overline{b_1} \land \overline{b_2}, w)$.

\subsection{Integer Encodings}

The previous section demonstrated the potential for binary encodings in knowledge compilation, but how do we connect this to probabilistic programs? In this section we give lightweight encoding strategies for translating 
\verb#discrete(..)# syntax for an arbitrary distribution over integers into distributions on Booleans, which knowledge-compilation-based languages like Dice and ProbLog are already capable of representing. How might such distributions be represented in a probabilistic programming language in practice? To make the problem concrete, we define the integer representation problem as follows: given an input vector $[p_0, .., p_w]$, we want a method which returns a distribution over integers taking on value $i$ with probability $p_i$. While this is relatively restricted by demanding that our distribution is contiguous with lowest value 0, we can convert this to other distributions (for example) by adding an offset or multiplying by a constant.

\subsubsection{A First Approach}
One natural way of constructing such a categorical distribution, is as a set of if-else statements, with each branch corresponding to a different value. For example, the following probabilistic program snippet would correspond to the integer distribution with probability vector $[0.1, 0.2, 0.3, 0.4]$. The syntax $\flip(\theta)$ used in the program is commonly used in discrete PPLs to represent a Bernoulli random variable with bias $\theta$. 

\begin{lstlisting}[mathescape=true]
if flip(0.1) // Bernoulli(0.1)
    return 0
elseif flip(0.2/0.9)
    return 1
elseif flip(0.3/0.7)
    return 2
else
    return 3 
\end{lstlisting}

We use a sequence of these random flips as arguments to the if-else statements to generate the mixture of numbers; note that we renormalize the flip probability at each step to get the correct distribution. This approach is generalized in Algorithm \ref{alg:categ_int}. This and future algorithms should be interpreted as a general method to represent a distribution over integers in any probabilistic programming language supporting Bernoulli random variables and (non-probabilistic) integers. 
Note that representing a categorical variable in this way is a probabilistic program framing of the SBK encoding presented by \cite{sang2005performing}.

\begin{algorithm}
    \caption{CATEG\_INT ($v \in [0, 1]^{w}$)}
    \label{alg:categ_int}
    \DontPrintSemicolon
    \KwIn{Vector $v$ such that $v[i] \propto $ pr($i$)}
    \color{black}
    \eIf{$w$ == $1$ {\normalfont \textbf{or}} $\flip\left(\frac{v[0]}{\sum v}\right)$}
        {\Return $0$}
        {//\ Recurse on the remainder of $v$\\
        \Return 1 + CATEG\_INT($v$[1:]) 
        } 
        
\end{algorithm}

What would occur if we use Algorithm \ref{alg:categ_int} to represent a distribution over binary-encoded integers in a language such as Dice? The BDD for one distribution represented using this approach is given in Figure \ref{fig:sbk_bdd}. 
Note that for each bit, the decision diagram is essentially a linear chain; intuitively, this corresponds to checking each if-else guard in sequence. 
There is almost no node reuse occurring in this BDD.

\subsubsection{A More Compact Encoding}

We propose an alternative method of representing integers from a probability vector that produces provably more compact BDDs. Rather than constructing our mixture by a linear pass through the probability vector, we can instead divide the vector into two parts, using a divide-and-conquer approach. 
Consider the same example as above, where we are again given as input a probability vector $[0.1, 0.2, 0.3, 0.4]$. To get the wanted distribution, we can conditionally add the value $2$ with probability $\frac{0.3+0.4}{0.1+0.2+0.3+0.4}$, corresponding to the latter half of the vector. Depending on if $2$ is added, we then conditionally add the value $1$, with probability derived from the subvectors $[0.1, 0.2]$ and $[0.3, 0.4]$. An example program implementing this is given below: 

\vspace{1em}

\begin{lstlisting}[mathescape=true]
num = 0
if flip(0.7) // 0.3 + 0.4
    num += 2
    if flip(0.4/0.7)
        num += 1
else
    if flip(0.2/0.3)
        num +=1 
return num 
\end{lstlisting}

This approach is formalized in Algorithm \ref{alg:categbits}. For the sake of simplicity, we assume the input vector is always of length $2^b$ for some number $b$; this means that we always divide the vector into its two halves. For an arbitrary input vector, we can simply pad 0 probability values to fulfill this condition; in practice, this algorithm can easily be adapted to work without this explicit padding.

\begin{algorithm}
    \caption{BITWISE\_INT ($v \in [0, 1]^{2^b}$)}
    \label{alg:categbits}
    \DontPrintSemicolon
    \KwIn{Vector $v$ such that $v[i] \propto $ pr($i$)}
        \color{black}
        $p \gets \frac{\sum_{i = 2^{b-1}}^{2^b - 1} v[i]}{\sum_{i = 0}^{2^b - 1} v[i]}$\;
        \eIf{$\mathit{length}(v)==1$}{\Return $0$}{\eIf{$\flip(p)$}{
        //\ Recurse on second half $\geq 2^{b-1}$
        \\\Return BITWISE\_INT($v[2^{b-1}:2^b]$) + $2^{b-1}$ }
        {
        //\  Recurse on first half $< 2^{b-1}$
        \;\Return BITWISE\_INT($v[0:2^{b-1}] $)}}
\end{algorithm}

Note that while Algorithm \ref{alg:categbits} uses arithmetic to produce the distribution, it only ever adds or return powers of two, which directly correspond to the bits of the integer. Therefore, when implementing the algorithm as a distribution on a tuple of bits, we encode each such addition by simply setting the appropriate bit. For the same example as above, our implementation constructs a tuple of bits $(b_1, b_0)$ corresponding to a binary number such that $b_1 = \flip(0.7)$ and $b_0 =$ if $b_1$ then $\flip(\frac{0.4}{0.7})$ else $\flip(\frac{0.2}{0.3})$. 

How does this method of representing integer distributions differ than the one given before? To see this, we look at the BDD for a distribution written in this manner given in Figure \ref{fig:bwh_bdd}. We can see a clear difference between this BDD and that for the approach given in Algorithm \ref{alg:categ_int}. The most significant bit corresponds to a BDD depending on only one flip, as this corresponds to the largest power of two: only one flip is used to determine its value. For the less significant bits, we add an additional layer of variables for each one, with the number of layers in total corresponding to the number of bits needed to represent the input distribution. This is in contrast to the CATEG\_INT encoding, which requires checking a linear chain of variables for each bit, and so achieves a much more compact BDD representation. 

We formalize this difference in BDD size in Proposition \ref{th:bdd_bound}. Note that variable order can greatly influence the size of a BDD, and finding the optimal variable order is an NP-hard problem~\citep{meinel1998algorithms}; 
 we follow the Dice convention of ordering logical variables using (strict, left-to-right) evaluation order. For example, in Figure ~\ref{example2}, the Boolean variables encoding the \texttt{discrete} distribution on Line 1 occur before the variables in the distribution on Line 2 in the order.

\begin{proposition} \label{th:bdd_bound}
A discrete distribution over the integers $\{0, 1 \ldots, 2^b - 1\}$ compiles to a BDD of size $\Theta(b2^b)$ when represented using CATEG\_INT (Algorithm \ref{alg:categ_int}) and a BDD of size $\Theta(2^b)$ when represented using BITWISE\_INT (Algorithm \ref{alg:categbits}), with variables in flip evaluation order.
\end{proposition}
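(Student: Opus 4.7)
The plan is to analyze each encoding one output bit at a time. Under the flip-evaluation variable order specified in the proposition, each bit $b_i$ of the output integer is a Boolean function of the flip variables with a canonical BDD shape that can be read off directly from the algorithm; summing these sizes and accounting for any node sharing across bit-BDDs yields the total multi-rooted BDD size. Both a matching upper and lower bound are required for each $\Theta$ claim.

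For CATEG\_INT, Algorithm \ref{alg:categ_int} introduces flips $f_0, \ldots, f_{2^b-2}$, and the output equals $j$ exactly when $f_0, \ldots, f_{j-1}$ are all false and $f_j$ is true (with $j = 2^b - 1$ when every flip is false). Thus bit $b_i$ compiles, under the flip order, to a linear chain: node $f_j$'s high edge goes directly to the constant $(j \gg i) \bmod 2$, and its low edge to node $f_{j+1}$. Each chain has $2^b - 1$ internal nodes, and no node is reducible because a constant and a non-constant child cannot coincide. Summing over the $b$ bits yields the $O(b \cdot 2^b)$ upper bound. For the lower bound, consider the sub-function $g_j^i$ computed at node $f_j$ in bit $b_i$'s chain. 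Sub-functions with different $j$ depend on disjoint variable suffixes and are automatically distinct; for fixed $j$, $g_j^i = g_j^{i'}$ iff bits $i$ and $i'$ of $j, j+1, \ldots, 2^b-1$ agree as sequences. A short counting argument on the $b \times 2^b$ binary matrix of bit-$i$ values shows that at $\Omega(2^b)$ positions $j$ the functions $g_j^0, \ldots, g_j^{b-1}$ are pairwise distinct, yielding $\Omega(b \cdot 2^b)$ distinct sub-functions and hence $\Omega(b \cdot 2^b)$ BDD nodes.

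For BITWISE\_INT, Algorithm \ref{alg:categbits} induces a complete binary recursion tree with $2^b - 1$ flips, where the flip at depth $d$ determines bit $b_{b-d}$ of the output conditioned on the path taken to that flip. Output bit $b_i$ therefore depends only on the flips at depths $1, \ldots, b - i$, and its BDD under the evaluation order is exactly the full binary test tree of $2^{b-i} - 1$ internal nodes with constant leaves, with no reducible nodes. Since nodes across different bit-BDDs are only merged when they share label and children, and the descendants differ between bits, the total node count is
\[
\sum_{i=0}^{b-1}\bigl(2^{b-i} - 1\bigr) = 2^{b+1} - b - 2 = \Theta(2^b),
\]
which establishes the upper bound. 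The matching lower bound is immediate because bit $b_0$'s BDD alone already uses every one of the $2^b - 1$ flip variables, forcing $\Omega(2^b)$ nodes.

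The main obstacle is the lower bound for CATEG\_INT: one must argue that the two BDD reduction rules (deleting a node with identical children, and merging same-labelled nodes with identical children) cannot compress the multi-rooted diagram below $\Omega(b \cdot 2^b)$. The first rule fails because every chain node has a constant on one branch and a non-constant on the other; the second is controlled by the combinatorial claim that the sub-functions $g_j^i$ are pairwise distinct across $i$ for a positive fraction of $j$. The BITWISE\_INT analysis, by contrast, is essentially direct summation once the recursion-tree structure of the algorithm is identified.
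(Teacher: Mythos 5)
Your overall strategy is the same as the paper's: fix the flip-evaluation variable order, identify the canonical per-output-bit BDD shape (a linear chain per bit for CATEG\_INT, a complete binary tree per bit for BITWISE\_INT), argue that no reduction or cross-root merging occurs, and sum. Your BITWISE\_INT analysis matches the paper essentially exactly, down to the count $2^{b+1}-b-2$. However, your CATEG\_INT analysis has two genuine problems. First, the reduced BDD's chain for bit $i$ does \emph{not} have $2^b-1$ nodes: once $j$ exceeds the largest integer in $\{0,\dots,2^b-1\}$ whose $i$-th bit is $0$, every remaining value has that bit equal to $1$, so the residual subfunction is the constant $1$ and the tail of the chain collapses under the deletion rule. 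This is exactly where your justification ``a constant and a non-constant child cannot coincide'' fails --- near the tail the low child \emph{is} the constant $1$. The paper's Lemma on the CATEG\_INT structure gives the correct chain length $2^b-2^{i-1}$ for root $i$, totalling $b2^b-2^b+1$ nodes. The asymptotic claim $\Theta(b2^b)$ survives either way, but the structure you describe is not the canonical reduced BDD, so you cannot invoke canonicity to conclude the size.

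Second, and more importantly, you correctly identify the lower bound (no merging of same-labelled nodes across the $b$ chains) as the main obstacle, but you leave the key step --- the ``short counting argument'' that the subfunctions $g_j^0,\dots,g_j^{b-1}$ are pairwise distinct at $\Omega(2^b)$ positions --- entirely unproven. The claim is true (e.g., for every $j\le 2^{b-1}-1$ the window $[j,2^b-1]$ contains the full block $[2^{b-1},2^b-1]$, inside which any two bit positions disagree somewhere), but as written the proof is incomplete at precisely the point you flag as hardest. The paper avoids this combinatorics altogether with a cleaner observation: the $b$ chains terminate at \emph{different} final decision variables, so subfunctions arising from different roots have different essential supports and can never coincide; hence no merging is possible anywhere, and the exact node count follows immediately. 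You would do well to either carry out your counting argument explicitly or adopt the different-depth argument, which gives the lower bound for free.
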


It is BITWISE\_INT that we have implemented in \texttt{Dice.jl} and experimentally evaluate in the next section.

\subsubsection{Uniform Integers} \label{subs:uniform}

The previous encoding strategy works for arbitrary distributions on integers, but in practice one often encounters common highly-structured distributions such as the uniform. One advantage of our approach is that we can exploit the structure of such distributions in order to scale significantly better than the general approach presented in Algorithm~\ref{alg:categbits}. In particular, the structure of the uniform distribution allows for a special encoding with fully independent flips. 

Since the probability of every integer is equal, we can encode a uniform distribution over integers $\{0, 1, \ldots 2^b-1\}$ by adding the values $2^0, 2^1, \ldots, 2^{b-1}$ independently with probability $0.5$. From a bitwise perspective, this is same as independently setting each bit of the number to be true with probability $0.5$. As an example, consider the uniform distribution over integers $\{0, 1 \ldots 15\}$.  Clearly, each possible instantiation of $(\mathit{flip}(0.5), \mathit{flip}(0.5), \mathit{flip}(0.5), \mathit{flip}(0.5))$ is equally likely, and thus equivalently the probability of each integer in the range. 

\begin{algorithm}
\SetAlgoLined
    \caption{UNIFORM(n)}
    \label{alg:uniformgen}
    \DontPrintSemicolon
    \KwIn{Positive integer $n$}
        $b \gets \lfloor\log_2(n)\rfloor$\;
        \eIf{ $\flip\left(\frac{2^b}{n}\right)$}{
            $\mathit{sum} \gets 0$\;
            \For{$i \gets 0$ \KwTo $b-1$}{
                \uIf{$\flip\left(\frac{1}{2}\right)$} {$\mathit{sum} \gets \mathit{sum} + 2^i$}}
                
        \Return $\mathit{sum}$\;
        }
        {\Return UNIFORM($n - 2^b$) + $2^b$} 
\end{algorithm}

The method described above works for uniform distributions whose range is $2^n$ for some $n$; for ranges that are not a power of 2, we use the fact we can decompose any natural number into a sum of powers of 2. This enables a uniform distribution over any range to be represented as a mixture of multiple uniform distributions over smaller power-of-two ranges. We 
formalize this idea in Algorithm \ref{alg:uniformgen}, which gives a method for representing uniform distributions starting at 0; the correctness of this approach is shown in the appendix. We can then use this approach to achieve any uniform distribution by adding an offset. Just like the previous algorithms, this algorithm is implemented by constructing sequences of bits in a manner equivalent to arithmetic. 

We note that unlike the BITWISE\_INT algorithm, where less significant bits have a dependence on more significant bits, our uniform algorithm leverages independence between the bits. Therefore, the BDD obtained when using UNIFORM is more compact than for our other algorithms, and fewer variables are needed to represent such a distribution.

\subsection{Efficient Integer Operations}

While the binary representations of discrete and uniform distributions over integers are interesting, they do not by themselves necessarily give much advantage. If adding two such distributions still requires an explicit enumeration of all possible sums, then we have not gained much over the existing inference approaches. However, the binary encoding enables us to leverage the structure of integers to do much better than this for common operations. In this section, we demonstrate this for integer comparisons and addition.
\vspace{-0.5em}
\subsubsection{Integer Comparisons}

The comparison operator on binary tuples can be implemented using logic circuits like those in computer hardware. Suppose we compute $a < b$ for two binary numbers $a = 001$ and $b = 100$. The circuit first compares the most significant bits (MSBs) of these numbers, which are 0 and 1 respectively - enough to know that $a < b$ is true. If the two numbers instead had the same MSB, we would need to start this comparison over on remaining bits. This process of computing $a < b$ highlights its key contextual independencies. First, given the MSBs of the operands are different, the result of $a < b$ does not depend on the remaining bits. Second, given the MSBs of the operands are same, the computation on the remaining bits does not depend on the value of the MSBs. This structure gets automatically exploited when we use this standard logic circuit to compare integer distributions, where the inputs are now weighted Boolean formulas represented as BDDs, rather than bits.

More concretely, consider the following probabilistic program which defines two random variables having a uniform distribution over the integers \{0, 1, \ldots, 7\} and then outputs the probability of one integer being less than the other.

\label{equality_prog}
\begin{lstlisting} 
a = uniform(0, 8)
b = uniform(0, 8)
return (a < b)
\end{lstlisting}

Enumerating all the values that $a$ and $b$ can take in the above program would lead to 64 combinations. In contrast, the BDD for the comparison operation has size linear in the number of bits, as it exploits contextual independences.  We later present empirical results demonstrating that this leads directly to better scalability for discrete inference.
\vspace{-0.5em}
\subsubsection{Integer Addition}

Consider two binary numbers $a = 001$ and $b = 100$ that we wish to add. %
The least significant bit (LSB) of $a+b$ is computed as the \emph{xor} of the LSBs of $a$, $b$ and 0 (the initial carry bit). The carry, computed as the \emph{and} of the LSBs of $a$ and $b$, is passed on to the next bit and the same process will be repeated for the remaining bits. The process described above shows that given the carry bit, each bit of the result is independent of the lesser significant bits of the operands.  Similar to the comparison operation, encoding addition on integer distributions as a logical circuit directly exploits such contextual independencies to produce a compact BDD, which in turn leads to significant performance gains. 
The manner in which addition corresponds to a compact BDD has been explored before; \cite{WEGENER2004229} show that for an optimal variable ordering, there is a linear bound on the BDD for addition.

In this section, we described the structure of two arithmetic operations, comparison and addition, independently. When composing these operations together, the compilation of weighted Boolean formulas will naturally compose as described in previous work~\citep{HoltzenOOPSLA20}. 
The sizes of the resulting BDDs depend highly on the variable ordering --- but even when the variable ordering is not optimal, contextual independences can still be identified and exploited, as shown by the experiments in the next section.

\vspace{-0.5em}
\section{Empirical Evaluation}\label{section:experiments}
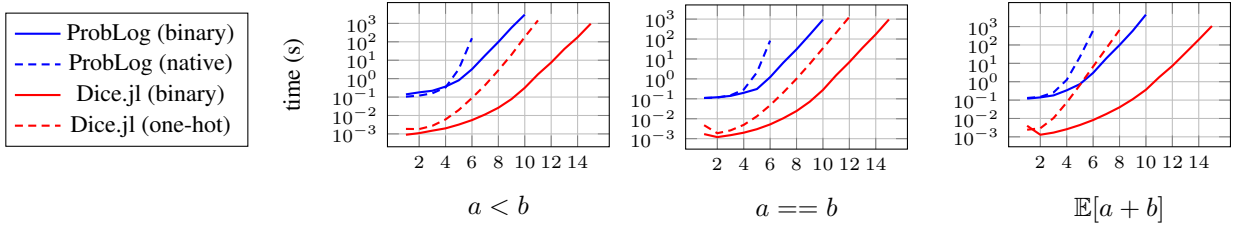
\begin{figure*}[t]
  \centering
\begin{minipage}[t]{0.45\linewidth}
\vspace{0pt}
  \centering
  \begin{tikzpicture}
    \begin{axis}[
        height=3.5cm,
        width=4.5cm,
        grid=major,
        xmode=linear,
        ymode=log,
        xtick={2,4,6,8,10,12,14},
        ytick={0.001, 0.01, 0.1, 1, 10, 100, 1000},
        xlabel={$a<b$},
        ylabel={\small time (s)},
        legend columns=1,
        yticklabel style = {font=\scriptsize},
        xticklabel style = {font=\scriptsize},
        legend style={at={(-15em,2.5em)},anchor=north west, font=\tiny},
        yminorticks=false
    ]
    \addplot[mark=none, thick, blue, mark size = 1.0pt] table [x index={0}, y index={2}] {\lessthandata}; %
    \addlegendentry{\small ProbLog (binary)};
    \addplot[mark=none, thick, blue, densely dashed, mark size = 1.0pt] table [x index={0}, y index={1}] {\lessthandata}; %
    \addlegendentry{\small ProbLog (native)};
    \addplot[mark=none, thick, red, mark size = 1.0pt] table [x index={0}, y index={3}] {\lessthandata}; %
    \addlegendentry{\small {Dice.jl} (binary)};
    \addplot[mark=none, thick, red, densely dashed, mark size = 1.0pt] table [x index={0}, y index={4}] {\lessthandata}; %
    \addlegendentry{\small {Dice.jl} (one-hot)};
    \end{axis}
  \end{tikzpicture}
  \label{fig:lessthan}
\end{minipage}~
\begin{minipage}[t]{0.25\linewidth}
\vspace{0pt}
  \centering
  \begin{tikzpicture}
    \begin{axis}[
		height=3.5cm,
		width=4.5cm,
		grid=major,
    xmode=linear,
    ymode=log,
    xtick={2,4,6,8,10,12,14},
    ytick={0.001, 0.01, 0.1, 1, 10, 100, 1000},
    yticklabel style = {font=\scriptsize},
    xticklabel style = {font=\scriptsize},
    xlabel={$a==b$},
    yminorticks=false
    ]
    \addplot[mark=none, thick, blue] table [x index={0}, y index={2}] {\equalsdata}; %
    \addplot[mark=none, thick, blue, densely dashed] table [x index={0}, y index={1}] {\equalsdata}; %
    \addplot[mark=none, thick, red] table [x index={0}, y index={3}] {\equalsdata}; %
    \addplot[mark=none, thick, red, densely dashed] table [x index={0}, y index={4}] {\equalsdata}; %
    \end{axis}
  \end{tikzpicture}
  \label{fig:laddernetscale}
\end{minipage}~
\begin{minipage}[t]{0.2\linewidth}
\vspace{0pt}
  \centering
      \begin{tikzpicture}
    \begin{axis}[
		height=3.5cm,
		width=4.5cm,
		grid=major,
    xmode=linear,
    ymode=log,
    xtick={2,4,6,8,10,12, 14},
    ytick={0.001, 0.01, 0.1, 1, 10, 100, 1000},
    yticklabel style = {font=\scriptsize},
    xticklabel style = {font=\scriptsize},
    xlabel={$\mathbb{E}[a + b]$},
    yminorticks=false
    ]
    \addplot[mark=none, thick, blue] table [x index={0}, y index={2}] {\plusdata}; %
    \addplot[mark=none, thick, blue, densely dashed] table [x index={0}, y index={1}] {\plusdata}; %
    \addplot[mark=none, thick, red] table [x index={0}, y index={3}] {\plusdata}; %
    \addplot[mark=none, thick, red, densely dashed] table [x index={0}, y index={4}] {\plusdata}; %
  \end{axis}
\end{tikzpicture}
\label{fig:motiv_scale}
\end{minipage}
\caption{Time needed to compute the given operation on two random integers with varying bitwidth (x-axis). 
}
  \label{fig:scaling}
\end{figure*}
In this section, we empirically evaluate our integer compilation strategy. While we have demonstrated that a binary encoding exposes structure that knowledge compilation can exploit, it remains to be seen if this can improve the performance of probabilistic programs. In addition, we have yet to show how our approach compares to other inference methods on larger, more complex arithmetic models. We seek to answer the following questions. 

\begin{enumerate}[label=\textbf{\arabic*}),noitemsep]
    \item Does a binary encoding benefit existing knowledge compilation based languages?
    
    \item Does our approach outperform those of existing PPLs that support exact discrete inference?
\end{enumerate}

To this end, we have implemented our integer representation in \verb#Dice.jl#, a PPL embedded in Julia that uses the same knowledge compilation approach as Dice~\citep{HoltzenOOPSLA20}. 
In \verb#Dice.jl# (binary), unsigned random integers are implemented using the strategy described in the previous section; signed random integers are additionally implemented as a natural extension. The language provides the syntax \verb#discrete(..)# for arbitrary integer distributions and \verb#uniform(..)# for uniform distributions, implemented using the algorithms in Section \ref{sec:math}, as well as the operators $=, <, +, -, *, /,$ and $\%$. Simple operators are implemented as logical circuits, while more complex operators are implemented by composing simpler ones. \footnote{Our implementation and code for all experiments are available at \url{https://github.com/Juice-jl/Dice.jl/tree/arithmetic}.}

Reported runtimes are a median over at least 5 runs; all experiments were run with a 1 hour timeout. 
A best-effort attempt was made for each language to implement benchmarks in a maximally performant manner. More experimental details are available in the appendix.

\subsection{Improving Knowledge Compilation Languages with a Binary Encoding}

In this section, we demonstrate the benefit a binary encoding brings to knowledge compilation based languages. We compare our integer representation method to the methods of the PPL Dice~\citep{HoltzenOOPSLA20} and the probabilistic logic programming language ProbLog~\citep{Fierens2015}, both of which use knowledge compilation as their approach to inference. We do this by comparing the time needed to compute the simple arithmetic operations $a+b$, $a<b$, and $a==b$ on random integers of width $2^n$ for varying $n$ from 1 to 15. Here, the runtime of each method corresponds to the time needed to compile and run inference on each representation, effectively measuring how well the knowledge compilation based inference can exploit the structure of each simple function. For addition, we use the expectation of the sum as our target computation to avoid an output distribution with an exponentially increasing support. 

To allow for a fair comparison between ProbLog's native integer representation and our binary representation, we implement equivalent ProbLog programs computing the arithmetic operations, one using native ProbLog encodings and one using a binary representation. Both programs can then be run using ProbLog, controlling for the specific knowledge compilation system. To compare with Dice's native one-hot integer encoding, we implement both the one-hot encoding and our binary encoding in \verb+Dice.jl+. We then run the simple arithmetic programs using both encodings. %

The results of these experiments are presented in Figure \ref{fig:scaling}. 
We can clearly see that our binary encoding outperforms the existing integer representation strategy used in each language; while at small distribution widths (on the order of $2^4$), they are roughly comparable, our approach scales much better to larger integer distributions.

\subsection{Complex Arithmetic Models}
\begin{table*}[!htp]\centering
\caption{Runtimes in seconds for probabilistic models using integers in various PPLs. \xmark\, indicates a timeout (over 1 hour).}\label{baselines}
\scriptsize
\begin{tabular}{lrrrrr}\toprule
Benchmarks              &{Dice.jl} (binary)         & {Dice.jl} (one-hot)   &WebPPL &Psi (DP) &Psi \\\midrule
book                    &\textbf{5.297}    &\xmark  &\xmark                  &\xmark &\xmark \\\cmidrule{1-6}
tugofwar                &\textbf{0.106}    &2660.373 &21.012             &\xmark &\xmark \\\cmidrule{1-6}
caesar-small            &\textbf{0.041}    &4.968 &0.074 &2.022       &402.196 \\
caesar-medium           &0.239             &39.518 &\textbf{0.135}     &12.505 &\xmark \\
caesar-large            &0.556             &122.109 &\textbf{0.227}     &30.387 &\xmark \\\cmidrule{1-6}
ranking-small           &\textbf{0.007}    &0.025 &0.83 &103.572      &\xmark \\
ranking-medium          &\textbf{0.022}     &0.077&\xmark &318.658         &\xmark \\
ranking-large           &\textbf{0.048}     &0.150&\xmark &330.51          &\xmark \\\cmidrule{1-6}
radar1                  &\textbf{0.034}     &0.664&118.002            &394.525 &2.517 \\\cmidrule{1-6}
floydwarshall-small     &\textbf{0.009}    &0.152 &\textbf{0.009}     &0.115 &113.467 \\
floydwarshall-medium    &\textbf{0.515}    &624.220 &9.51               &2792.14 &\xmark \\
floydwarshall-large     &\textbf{3.406}     &\xmark&\xmark                  &\xmark &\xmark \\\cmidrule{1-6}
linear extensions-small &\textbf{0.003}     &0.004&0.016              &0.351 &5.153 \\
linear extensions-medium &\textbf{0.007}   &0.013 &0.465              &111.38 &\xmark \\
linear extensions-large &\textbf{0.072}    &0.164 &162.009            &\xmark &\xmark \\\cmidrule{1-6}
triangle-small          &\textbf{0.086}     &102.544&3.693              &616.746 &482.14 \\
triangle-medium         &\textbf{0.455}     &1123.171&28.354             &\xmark &\xmark \\
triangle-large          &\textbf{17.365}   &\xmark &\xmark           &\xmark &\xmark \\\cmidrule{1-6}
gcd-small               &2.876             &\xmark &\textbf{0.189}     &24.33 &\xmark \\
gcd-medium              &103.614           &\xmark &\textbf{2.501}     &467.581 &\xmark \\
gcd-large               &\xmark            &\xmark &\textbf{46.626}    &\xmark & \xmark\\\cmidrule{1-6}
disease-small           &7.91              &\xmark &\textbf{1.093}     &109.242 &1009.848 \\
disease-medium          &764.212           &\xmark &\textbf{327.545}   &\xmark &\xmark \\\cmidrule{1-6}
luhn-small              &\textbf{0.039}    &0.594 &0.428              &44.164 &\xmark \\
luhn-medium             &\textbf{4.575}    &23.933 &42.372             &\xmark &\xmark \\
\bottomrule
\end{tabular}
\end{table*}

We also evaluated \verb+Dice.jl+ on more complex models involving distributions over integers. The models were taken from a variety of sources. These include examples involving integers from the existing PPL literature, various examples using continuous distributions adapted to a discrete space, natural modelling tasks using integers such as ranking and text manipulation, and traditional algorithms in a probabilistic setting. A short description of our baselines and their sources is given in the appendix.

As a point of comparison, we use two other PPLs supporting exact discrete inference. We identify two major classes of exact inference approaches used for discrete probabilistic programs: enumerative methods, which work by enumerating all paths through the program, and symbolic methods, which represent and compute the probability distribution through symbolic expressions. We compare against WebPPL \citep{dippl} from the former category and Psi~\citep{gehr2016psi} from the latter. 

We also compare against a version of \verb#Dice.jl# that uses a one-hot encoding of integer distributions as a proxy for existing knowledge compilation approaches; this comparison avoids language-specific differences in performance. Compiled BDD sizes for the programs are provided in the appendix as an additional metric. 

Our results are summarized in Table \ref{baselines}. Many of our benchmark models can naturally be scaled to different sizes; they are implemented in small, medium, and large (corresponding to model size) variants to display the scaling behavior. Psi supports two exact inference algorithms: a default symbolic exact inference algorithm ("Psi") and its specialized dynamic programming inference algorithm ("Psi (DP)"). 

For the majority of benchmarks, our approach outperforms the current exact inference approaches, often achieving an orders-of-magnitude speedup. This result empirically validates the ability of our compilation strategy to exploit arithmetic structure in order to improve inference performance. We observe that the binary encoding outperforms the one-hot encoding with the same underlying knowledge compilation approach, demonstrating its superiority in exposing arithmetic structure. 

We note that \verb#Dice.jl# does not always outperform WebPPL, the enumeration based approach. We make special note of these examples. The caesar example introduces many random integers but immediately makes an observation on their value, thereby reducing the enumeration task and making this tractable for all approaches. The disease example contains parametric distributions on integers; for example, a binomial distribution with parameter $n$ distributed by a uniform distribution. These distributions have much less structure to exploit, and so our approach becomes essentially enumerative, but with additional overhead in compilation. The GCD example, which makes repeated use of the mod ($\%$) operator, similarly has a harder to exploit structure. However, we can see in general that our approach scales well to larger examples and outperforms existing PPLs that support exact discrete inference.

\vspace{-0.5em}

\section{Enabling Continuous Priors with Discrete Distributions}\label{sec:betabern}

Previous sections presented an inference strategy for integer distributions allowing for the scaling of integer arithmetic. We now demonstrate an interesting application of these integers: representing the continuous Beta distribution in a discrete space. 
Continuous priors are an essential part of Bayesian reasoning.
One particularly useful prior for discrete PPLs like \texttt{Dice.jl} is the \emph{Beta prior} $\Beta(\alpha,\beta)$, which is conjugate to the Bernoulli. The $\Beta$ distribution is continuous and thus not amenable to direct representation in \texttt{Dice.jl}.  However, we observe that if a Beta prior for a Bernoulli random variable has integral parameters $\alpha$ and $\beta$, then the posterior distribution is also a Beta with integral parameters. This section explains how we use this observation to represent Beta distribution in \texttt{Dice.jl}.

Assume we have the following program where \texttt{Dice.jl} is extended to permit restricted classes of $\Beta$ priors, where the parameters $\alpha$ and $\beta$ must be constant integers:
\begin{lstlisting}[mathescape=true]
$\theta$ = Beta(1, 2)
x = flip($\theta$)
observe(x)
return $\theta$
\end{lstlisting}
We can perform inference for the posterior by exploiting well-known conjugacy results between $\Beta$ priors and Bernoullis. In particular, observing that \texttt{x} is true, as is done on Line 3 above, increases the \emph{pseudocount} $\alpha$ by 1, making the posterior for $\theta$ become $\Beta(2,2)$. Similarly, observing that \texttt{x} is false increases the pseudocount $\beta$ by 1.

To automate this approach in \texttt{Dice.jl}, we introduce program variables A and B to represent the pseudocounts $\alpha$ and $\beta$, respectively. We then conditionally update these pseudocounts after each \verb+flip+: increment $\alpha$ if the \verb+flip+ returns true; otherwise increment $\beta$.  Doing so ensures that later observations will have the desired effect on the pseudocounts.

The only remaining challenge is that discrete PPLs that employ knowledge compilation, like \verb+Dice.jl+, only support \verb+flip+s whose parameters are constants, so the \verb+flip+$(\theta)$ on Line 2 above is not supported.  To encode it, we use the fact that $\mathbb{E}[\Beta(\alpha, \beta)] = \frac{\alpha}{\alpha+\beta}$, so the \verb+flip+ on line 2 can be simplified to \verb+flip+$(\frac{A}{A+B})$. 
Unfortunately, \texttt{Dice.jl} still does not support this construct, since $\frac{A}{A+B}$ is not a constant.  However, $A+B$ is always a deterministic integer, since each observation increases it by exactly 1.  Therefore, we can introduce a variable T representing A+B and encode \verb+flip+$(\frac{A}{A+B})$ as $\uniform(0, T) < A$ (where $\uniform(0, n)$ is the uniform distribution over the integers ${0,..,n-1}$). Finally, we observe that it is not necessary to maintain both variables A and B, since B is derivable from A and T. The final transformed version of the program is as follows:

\begin{lstlisting}[mathescape=true]
A = 1, T = 3 
x = uniform(0, T) < A 
A = if x then A+1 else A
T = T + 1   
observe(x)
return (A, T-A)
\end{lstlisting}

If we wish to draw another flip on the same Beta prior, we can simply repeat the code in lines 2-4 above. In this way, what we have actually implemented is a Beta-Bernoulli process via a Polya urn model - something analyzed in detail in probabilistic programs by \cite{staton}. We also note that this representation strategy --- an implementation of an urn model --- can also be used for many other distributions in addition to the Beta~\citep{polya}. An example application of this Beta prior --- learning Bayesian network parameters --- is given in the appendix.

\section{Related Work}\label{sec:related}

\emph{PPL Inference.}~~
Knowledge-compilation-based PPLs are most closely-related to this work~\citep{HoltzenOOPSLA20, de2007problog, Fierens2015, saad2021sppl, pfanschilling2022sum}. All these languages stand to benefit from our new binary-encoding. Other PPLs perform exact discrete inference by eliminating discrete variables via enumeration or variable elimination; these approaches lose global structure and hence cannot exploit arithmetic structure as in our approach~\citep{dippl, bingham2019pyro}. Symbolic methods support integers by representing them as a symbolic formula or program~\citep{gehr2016psi,narayanan2016probabilistic}; we believe that in principle it may be possible to adapt these symbolic representations to use a binary representation, but currently these systems do not. Recent work uses probability generating functions (PGFs) to represent (potentially unbounded) discrete distributions~\citep{EPIUGF2023}. 
PGFs represent the distribution symbolically, but do not appear to be compatible with our strategy for binary encodings.
Sampling based inference algorithms work very well for probabilistic programs with continuous distributions, but do not exploit the global structure of integer arithmetic~\citep{KANTAS2009774, hoffman2014no, Arouna+2004+1+24,10.5555/3061053.3061128}.
Finally, there are algorithms that seek to efficiently model integer distributions with recursion~\citep{KY76, pmlr-v108-saad20a}; these approaches are orthogonal to ours, as we do not use recursion.

\emph{Graphical models.}~~
Probabilistic graphical model (PGM) based inference methods~\citep{pfeffer2009figaro, McCallum2009, sanner2012symbolic, koller2009probabilistic} support integers by treating them as categorical distributions. PGMs struggle to represent arithmetic: for instance, a CPT for adding two $n$-bit numbers requires $O(2^n)$ entries.

\section{Conclusion}\label{sec:conclusion}
We presented a strategy for encoding random integers in probabilistic programs via a binary representation, which allows arithmetic operations to be performed through standard Boolean circuits. When combined with the knowledge compilation approach to probabilistic inference, this strategy naturally exploits structure in arithmetic that current approaches do not account for. We showed empirically that this allows existing discrete PPLs to scale to significantly more complex probabilistic models. One interesting consequence is that we can now leverage conjugacy to represent the Beta distribution, a continuous distribution, in purely discrete programs.

\begin{acknowledgements} %
    This work was funded in part by the DARPA PTG Program under contract number HR00112220005, NSF grants \#IIS-1943641, \#IIS-1956441, \#CCF-1837129, \#CCF-2220408, and a gift from RelationalAI. GVdB discloses a financial interest in RelationalAI.

\end{acknowledgements}

\bibliography{cao_757}

\appendix
\onecolumn

\section{Proofs}

\allowdisplaybreaks

\subsection{BDD Sizes of Integer Distribution Encodings (Proposition 1)}

\subsubsection{Preliminaries}

A $b$-rooted BDD $B$ with $m$ decision variables computes some function $\{0, 1\}^m \to \{0, 1\}^b$ at its roots. Note that we can treat each root of the BDD as corresponding to an individual function  $\{0, 1\}^m \to \{0, 1\}$; in this manner we can equivalently treat such a BDD as a tuple of functions $(\{0, 1\}^m \to \{0, 1\})^b$. We will let $B(\vec{x}) = (\bddrt{B}{1}, \dots, \bddrt{B}{b})$ denote this tuple. Here, $\vec{x}$ contains the decision variables of the BDD, with the variable order of the BDD matching the order of the components of $\vec{x}$. 

Note that the function computed by the BDD has as output a bit vector of length $b$. This can be interpreted as a $b$-bit unsigned integer, and we will treat the two interchangeably throughout these proofs. This also suggests that a function $f : \{0, 1\}^m \to \{0, \dots, 2^b - 1\}$ can equivalently specify a BDD; we will use $\ftobdd{f}$ to denote the corresponding BDD. 

We will be dealing with probabilistic BDDs, where each decision variable is given a weight representing a probability. Intuitively, this can be seen as replacing each $x_i$ with a random $X_i = \operatorname{Bernoulli}(p_i)$; each decision variable has positive weight $p_i$ and negative weight $1-p_i$. The BDD can thus be viewed as representing a probability distribution over potential outputs, where the probability corresponds to the weighted paths through the BDD. We will say a probabilistic BDD $B(\Vec{X})$ with probability assignments $\vec{p}$ \textit{encodes} a probability distribution over unsigned integers $\Pr(V)$ if the two distributions are the same, i.e. the distribution over output bits of $(B(\Vec{X}), \vec{p})$, when interpreted as integers, is identical to $\Pr(V)$.

\subsubsection{Categorical Encoding of Integer Distributions}

Recall Algorithm \ref{alg:categ_int}. Expanding the recursion and simplifying the arithmetic structure, we get the following natural encoding method for integers, which is similar to how the algorithm is implemented in practice. 

\begin{gather*}
    g(\vec{x}) = \begin{cases}
        0 & \operatorname{if\ }x_0\\
        1 & \operatorname{else\ if\ }x_1\\
        \dots\\
        2^b - 2 & \operatorname{else\ if\ }x_{2^b - 2}\\
        2^b - 1 & \operatorname{else}
    \end{cases}
\end{gather*}

In Algorithm \ref{alg:categ_int}, each $x_i$ is given some probability $p_i$, corresponding to the $\flip$ in the algorithm. In this algorithm, we recurse on the tail of the input probability vector $v$, and so this probability $p_i = \frac{v[0]}{\sum v}$ represents the conditional probability that we return a number, given that our returned value is greater than or equal to that number. We formalize this encoding method in Definition \ref{categ_int_formal}. 

\begin{definition}
Given a distribution $\Pr(V)$ over the integers $\{0,...,2^b-1\}$,
define $\operatorname{encode_{CATEG}}(\Pr(V)) = (\ftobdd{g}, \vec{p})$, where $p_i = \Pr(V = i | V \geq i)$ for $0 \leq i \leq 2^b-2$, and  $g$ is defined as follows:
\begin{gather*}
    g(\vec{x}) = \begin{cases}
        0 & x_0\\
        1 & \lnot x_0 \land x_1\\
        \dots\\
        j & x_j \land \bigwedge\limits_{i=0}^{j-1}\lnot x_i \operatorname{\ and\ } 0 \leq j \leq 2^b-2\\
        \dots\\
        2^b - 1 & \bigwedge\limits_{i=0}^{2^b-2}\lnot x_i
    \end{cases}
\end{gather*}

Note that with the vector variable order $x_0, x_1, .., x_{2^b-2}$, this exactly matches the program variable order of Algorithm \ref{alg:categ_int}, and thus the evaluation order in the proposition. 
\label{categ_int_formal}
\end{definition}

\begin{lemma} \label{geqinductive}
 $\Pr(v \geq x) = \prod_{i=0}^{x-1} \Pr(v \neq i | v \geq i) \forall x, v \in \mathbb{N}_0$. Proof omitted; proceed by induction.
\end{lemma}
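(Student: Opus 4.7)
The plan is a direct induction on $x$, exactly as the author hints. For the base case $x = 0$, the left side is $\Pr(v \geq 0) = 1$ since $v \in \mathbb{N}_0$, and the right side is an empty product, also equal to $1$.

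For the inductive step, I would assume $\Pr(v \geq x) = \prod_{i=0}^{x-1} \Pr(v \neq i \mid v \geq i)$ and derive the same identity at $x+1$. Since $v$ is integer-valued, the event $\{v \geq x+1\}$ coincides with $\{v \geq x\} \cap \{v \neq x\}$, so the chain rule gives
\[
\Pr(v \geq x+1) \;=\; \Pr(v \geq x) \cdot \Pr(v \neq x \mid v \geq x).
\]
Substituting the inductive hypothesis into the first factor produces $\prod_{i=0}^{x} \Pr(v \neq i \mid v \geq i)$, closing the induction.

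The only subtlety — and the only real obstacle I anticipate — is the degenerate case where $\Pr(v \geq x) = 0$ for some $x$, which makes the conditional probability $\Pr(v \neq x \mid v \geq x)$ undefined. The clean way to handle this is to truncate the product at the smallest $x$ for which the survival probability vanishes: from that point onward both sides of the identity are $0$, and the factorization continues to hold in the extended sense. In the intended application of this lemma within the proof of Proposition \ref{th:bdd_bound} (where it will be used to reason about the encoding from Definition \ref{categ_int_formal}), all relevant conditional probabilities are strictly positive for the generic input distributions under consideration, so this corner case does not interfere with the downstream use.
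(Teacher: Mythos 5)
Your proof is correct and is exactly the induction the paper intends (the paper omits the proof entirely, saying only ``proceed by induction''): the base case is the empty product, and the inductive step uses $\{v \geq x+1\} = \{v \geq x\} \cap \{v \neq x\}$ with the chain rule. Your extra remark about the degenerate case $\Pr(v \geq x) = 0$ is a sensible clarification that the paper does not address, but it does not change the argument.
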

\begin{lemma} \label{sbkencodes}
For any distribution $\Pr(V)$ over the integers $\{0,...,2^b-1\}$, $\operatorname{encode_{CATEG}}(\Pr(V))$ encodes $\Pr(V)$.
\end{lemma}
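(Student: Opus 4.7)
The plan is to compute the probability distribution at the output of $\ftobdd{g}$ directly and show it coincides with $\Pr(V)$. Since the decision variables $x_0, \dots, x_{2^b-2}$ are independent Bernoullis (with positive weights $p_i$), the probability that $g$ returns a particular value $j$ factors into a product of the marginal probabilities of the conjuncts in the defining clause.

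First, I would fix $j$ with $0 \leq j \leq 2^b - 2$ and write
\begin{align*}
\Pr(g(\vec{X}) = j) &= \Pr\!\left(X_j \land \bigwedge_{i=0}^{j-1} \lnot X_i\right) = p_j \prod_{i=0}^{j-1} (1 - p_i).
\end{align*}
Substituting the definition $p_i = \Pr(V = i \mid V \geq i)$, this becomes
\begin{align*}
\Pr(g(\vec{X}) = j) = \Pr(V = j \mid V \geq j) \prod_{i=0}^{j-1} \Pr(V \neq i \mid V \geq i).
\end{align*}
By Lemma \ref{geqinductive}, the product equals $\Pr(V \geq j)$, so $\Pr(g(\vec{X}) = j) = \Pr(V = j \mid V \geq j) \cdot \Pr(V \geq j) = \Pr(V = j)$ by the chain rule.

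Next I would handle the boundary case $j = 2^b - 1$ separately. Here the defining clause is $\bigwedge_{i=0}^{2^b-2} \lnot X_i$, so
\begin{align*}
\Pr(g(\vec{X}) = 2^b - 1) = \prod_{i=0}^{2^b-2}(1 - p_i) = \prod_{i=0}^{2^b-2} \Pr(V \neq i \mid V \geq i) = \Pr(V \geq 2^b - 1),
\end{align*}
again by Lemma \ref{geqinductive}. Because $V$ is supported on $\{0, \dots, 2^b - 1\}$, we have $\Pr(V \geq 2^b - 1) = \Pr(V = 2^b - 1)$, closing the case. Since the events $\{g(\vec{X}) = j\}$ are disjoint and exhaustive over $j \in \{0, \dots, 2^b-1\}$, the two distributions agree pointwise and the encoding is correct.

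The only subtlety I anticipate is being careful about the edge condition at $j = 2^b - 1$, where there is no $p_{2^b - 1}$ defined (the BDD returns this value by default when all prior flips fail), and ensuring the empty-product convention for $j = 0$ is handled cleanly so that $\Pr(g(\vec{X}) = 0) = p_0 = \Pr(V = 0)$ follows immediately. Neither of these requires more than a line of bookkeeping, so the proof should be short and entirely driven by the independence of the $X_i$ and Lemma \ref{geqinductive}.
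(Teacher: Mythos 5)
Your proposal is correct and follows essentially the same route as the paper's proof: split into the cases $0 \leq j \leq 2^b-2$ and $j = 2^b-1$, factor the clause probability by independence of the $X_i$, substitute $p_i = \Pr(V = i \mid V \geq i)$, and collapse the product via Lemma \ref{geqinductive} to recover $\Pr(V = j)$. The edge-case bookkeeping you flag is handled implicitly in the paper and does not change the argument.
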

\begin{proof}
Let $(\ftobdd{g}, \vec{p}) = \operatorname{encode_{CATEG}}(\Pr(V))$. We prove that for all $0 \leq j \leq 2^b - 1$, $\Pr(g(\vec{X})=j) = \Pr(V=j)$.

\textit{Case 1:} $0 \leq j \leq 2^b-2$.
\begin{align*}
    \Pr(g(\vec{X})=j) = \Pr\left(X_{j} \land \bigwedge\limits_{i=0}^{j-1}\lnot X_i\right)
    &=  \Pr(X_{j})\prod_{i=0}^{j-1}\Pr(\lnot X_i) && \text{(Independence)}\\
    &=\Pr(V=j|V\geq j)\prod_{i=0}^{j - 1}\Pr(V \neq i \mid V \geq i) && \text{(As $\Pr(X_i) = p_i$)}\\
    &=\Pr(V=j|V\geq j)\Pr(V \geq j) && \text{(Lemma \ref{geqinductive})}\\
    &=\Pr(V=j)
\end{align*}

\textit{Case 2:} $j = 2^b-1$.
\begin{align*}
    \Pr(g(\vec{X})=2^b-1) = \Pr\left(\bigwedge\limits_{i=0}^{2^b-2}\lnot X_i\right)
    &=\prod_{i=0}^{2^b-2}\Pr(\lnot X_i) && \text{(Independence)}\\
    &=\prod_{i=0}^{2^b-2}\Pr(V \neq i | V \geq i) && \text{(As $\Pr(X_i) = p_i$)}\\
    &= \Pr(V \geq 2^b - 1) && \text{(Lemma \ref{geqinductive})}\\
    &= \Pr(V = 2^b - 1)
\end{align*}
\end{proof}

\subsubsection{BDD Size of the CATEG\_INT Encoding}

We prove the first half of Proposition 1.
\begin{propositionhalf}{1 (first half)} \label{sbksize}
A discrete distribution over the integers $\{0, 1 \ldots, 2^b - 1\}$ compiles to a BDD of size $\Theta(b2^b)$ when represented using CATEG\_INT (Algorithm \ref{alg:categ_int}), with variables in flip evaluation order.
\end{propositionhalf}

We will use $\lsb{i}{j}$ to denote the $i$th least significant bit of $j$; for example, $\lsb{2}{13}$ = 0. 

\begin{lemma} \label{sbkstructure}
Let $(\ftobdd{g}, P) = \operatorname{encode_{CATEG}}(\Pr(V))$, where $\Pr(V)$ is the distribution over the integers $\{0, .., 2^b-1\}$. $\ftobdd{g}$ will be exactly the BDD $C$ with the following structure:

For each root $1 \leq i \leq b$, $C$ has nodes $N_{i, 0}, N_{i, 1}, \dots, N_{i, 2^b - 2^{i-1} - 1}$, at the levels of decision variables $x_0, x_1, \dots, x_{2^b - 2^{i - 1} - 1}$, respectively.
\begin{align*}
    \low{N_{i, j}} &= \text{if \quad } j < 2^b - 2^{i-1} - 1 \text{\quad then \quad} N_{i, j + 1} \text{\quad else \qquad} 1\\
    \high{N_{i, j}} &= \lsb{i}{j}
\end{align*}
The roots of the BDD $C_i$ each point to the corresponding node $N_{i, 0}$.
\end{lemma}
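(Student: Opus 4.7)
My plan is to verify the claimed ROBDD structure by (i) exhibiting the Boolean function computed at each putative node, (ii) confirming that the claimed arrangement of low/high children realizes this function, and (iii) checking that the resulting diagram is already reduced so it coincides with the canonical compiled BDD.

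First, for a fixed root index $i$, note that by Definition \ref{categ_int_formal}, $g(\vec{x})$ equals the smallest index $j_0$ with $x_{j_0}$ true (and $2^b - 1$ if no such index exists), so bit $i$ of $g(\vec{x})$ is $\lsb{i}{j_0}$. Define $g_{i,j}$ as the restriction of this output bit to the half-space $x_0 = \cdots = x_{j-1} = 0$. Viewing $g_{i,j}$ as a function of the remaining variables, Shannon expansion along $x_j$ yields
\[
g_{i,j}(\vec{x}) \;=\; \itep{x_j}{\lsb{i}{j}}{g_{i,j+1}(\vec{x})},
\]
which is precisely the rule that the high edge of $N_{i,j}$ points to the terminal $\lsb{i}{j}$ and the low edge continues to $N_{i,j+1}$.

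Second, I would identify where the chain collapses to the constant $1$. The function $g_{i,j+1}$ is identically $1$ exactly when every candidate output in $\{j+1, \ldots, 2^b-1\}$ has $i$th bit equal to $1$. Since the $i$th bit is constant on any block of $2^{i-1}$ consecutive integers aligned to a multiple of $2^{i-1}$, and the final such block $\{2^b - 2^{i-1}, \ldots, 2^b - 1\}$ consists of integers all with $\lfloor v/2^{i-1} \rfloor = 2^{b-i+1}-1$ (odd), all its members have $i$th bit equal to $1$. The collapse first occurs when $j+1 \ge 2^b - 2^{i-1}$, giving the terminating index $j = 2^b - 2^{i-1} - 1$. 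One also checks $\lsb{i}{2^b - 2^{i-1} - 1} = 0$, since subtracting $2^{i-1}$ from the all-ones pattern $2^b-1$ flips precisely the $(i-1)$st bit; this matches the claimed $0/1$ terminals at the last node.

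Third, I would verify the diagram is reduced. Edge reduction holds because the high child of every $N_{i,j}$ is a terminal while the low child is either a node at a strictly later level (never a terminal) or, at the bottom, the terminal $1$ opposite a high child of terminal $0$. For node merging, two candidates at the same level $x_j$ coincide only if they share both children; I would proceed by reverse induction on $j$, observing that the low pointers either target terminals of opposite polarity to the corresponding high children, or descend into lower-level nodes that the inductive hypothesis has already distinguished. The main obstacle will be this cross-chain uniqueness argument: one must rule out that two distinct chains accidentally agree on both children at a shared level. A case split on whether each node is its chain's terminating node --- together with the fact that distinct chains have distinct lengths and thus distinct bottommost nodes --- should close the induction.
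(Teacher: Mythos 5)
Your proposal is correct and lands at the same overall skeleton as the paper's proof --- exhibit the structure, show it is reduced, invoke canonicity of reduced OBDDs, and verify that the diagram computes $g$ --- but the central verification step is done by a genuinely different route. The paper runs a forward induction over evaluation ``states'' $S_{i,t}$, maintaining the invariant that the state equals $\lsb{i}{g(\vec{x})}$ once $g(\vec{x}) < t$ or $t$ exceeds the chain length and equals $N_{i,t}$ otherwise, and discharges the inductive step by a sequence of if-then-else rewrites. You instead identify each node $N_{i,j}$ with the cofactor $g_{i,j}$ of the $i$th output bit on the subcube $x_0=\cdots=x_{j-1}=0$ and verify the Shannon expansion $g_{i,j} = \itep{x_j}{\lsb{i}{j}}{g_{i,j+1}}$ directly, locating the collapse to the constant $1$ via the same arithmetic fact the paper uses (that $2^b-2^{i-1}-1$ is the largest value below $2^b$ with $i$th bit $0$). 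The cofactor route is shorter and more idiomatic for BDD correctness arguments; the paper's state induction is more mechanical but is reused verbatim as the template for the analogous structure lemma for the bitwise encoding. On reducedness you are actually slightly more careful than the paper, which merely asserts that chains of distinct depths cannot share subfunctions, whereas your reverse induction with the case split on terminating nodes makes the cross-chain non-mergeability explicit; that induction does close as you sketch it, since the two chains' bottommost nodes sit at distinct levels and differ there in whether the low child is a terminal. The only blemish is cosmetic: your ``$(i-1)$st bit'' uses zero-based positions while the paper's $\lsb{i}{\cdot}$ is one-based, but the conclusion $\lsb{i}{2^b-2^{i-1}-1}=0$ is correct either way.
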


Intuitively, the high edge of a node indicates that that decision variable is true, meaning that we have "chosen" our number (and thus the corresponding value for the bit); the low edge means that we should move on to the next bit. As an example, the BDD for $b = 2$ follows (terminal nodes visually duplicated for clarity). 

\vspace{-10pt}
\begin{center}
    \begin{minipage}[t]{0.15\textwidth}
    \vspace{0pt}
        \begin{center}
        \begin{tikzpicture}
          \def\lvl{35pt}
        \node (c1) at (0, 0) [root] {$C_1$};
        \node (n10) at ($(c1) + (0bp, -30pt)$) [bddnode] {$N_{1,0}$};
        \node (n11) at ($(n10) + (0bp, -\lvl)$) [bddnode] {$N_{1,1}$};
        \node (n12) at ($(n11) + (0bp, -\lvl)$) [bddnode] {$N_{1,2}$};
        \node (true) at ($(n12) + (-15bp, -30pt)$) [bddterminal] {$\true$};
        \node (false) at ($(n12) + (15bp, -30pt)$) [bddterminal] {$\false$};
        
        \begin{scope}[on background layer]
          \draw [-stealth] (c1) -- (n10);
          \draw [lowedge] (n10) -- (n11);
          \draw [lowedge] (n11) -- (n12);
          \draw [lowedge] (n12) -- (true);
          \draw [highedge] (n12) -- (false);
          
          \path[-] (n11) edge [bend right, highedge] node {} (true);
          \path[-] (n10) edge [bend left, highedge] node {} (false);

        \end{scope}
        \end{tikzpicture}
        \end{center}
    \end{minipage}
    \begin{minipage}[t]{0.15\textwidth}
    \vspace{0pt}
    
        \begin{tikzpicture}
          \def\lvl{35pt}
        \node (c2) at (0, 0) [root] {$C_2$};
        \node (n20) at ($(c2) + (0bp, -30pt)$) [bddnode] {$N_{2,0}$};
        \node (n21) at ($(n20) + (0bp, -\lvl)$) [bddnode] {$N_{2,1}$};
        \node (true) at ($(n21) + (-15bp, -30pt-\lvl)$) [bddterminal] {$\true$};
        \node (false) at ($(n21) + (15bp, -30pt-\lvl)$) [bddterminal] {$\false$};
        
        \begin{scope}[on background layer]
          \draw [-stealth] (c2) -- (n20);
          \draw [lowedge] (n20) -- (n21);
          \draw [lowedge] (n21) -- (true);
          \draw [highedge] (n21) -- (false);
          
          \path[-] (n20) edge [bend left, highedge] node {} (false);

        \end{scope}
        \end{tikzpicture}
    \end{minipage}
\end{center}

\begin{proof} 
We argue that the described BDD is reduced. Note that the BDD essentially consists of linear chains from each root node. It is clear that no node has isomorphic children, as one child is always a terminal node while the other child is a decision node, and therefore we cannot eliminate any nodes. As the BDD consists of chains, merging nodes must be done between chains. As described, each chain is of a different depth, and so resolves on a different final decision variable. Therefore, at no stage can two nodes compute the same subfunction, and so we cannot merge nodes. 

As our BDD is reduced, it is canonical for the represented function. Therefore, showing that the function represented by $C$ is the same as $g$ will show that our BDD $\ftobdd{g}$ has the same structure.  

We will now show that for all $\vec{x}$,  $\bddrt{C}{i} = \lsb{i}{g(\vec{x})}$, and thus that the functions are equivalent.

Let $S_{i,j}(\vec{x})$ denote the state of $C_i$ at the level of $x_j$, given a (potentially partial) assignment to $\vec{x}$; this is the subfunction that must be computed at that point. This is usually a node at level $x_j$, but can be a lower level node such as a terminal node (representing that the decision node can be skipped). See the following BDD and corresponding states as an example.

\begin{center}
    \begin{minipage}{0.3\textwidth}
    \begin{center}
        \begin{alignat*}{4}
        S_{1,0}(&x_0,x_1,x_2) &&= N_{1,0}\\
        S_{1,1}(&0, x_1, x_2) &&= N_{1,1}\\
        S_{1,1}(&1, x_1, x_2) &&= 0\\
        S_{1,2}(&0, 0, x_2) &&{}={} N_{1,2}\\
        S_{1,2}(&0, 1, x_2) &&{}={} 1\\
        S_{1,2}(&1, x_1, x_2) &&= 0
        \end{alignat*}

    \end{center}
    \end{minipage}
    \begin{minipage}{0.4\textwidth}

    \vspace{0pt}
        \begin{center}
        
        \begin{tikzpicture}
          \def\lvl{35pt}
        \node (c1) at (0, 0) [root] {$C_1$};
        \node (n10) at ($(c1) + (0bp, -30pt)$) [bddnode] {$N_{1,0}$};
        \node (n11) at ($(n10) + (0bp, -\lvl)$) [bddnode] {$N_{1,1}$};
        \node (n12) at ($(n11) + (0bp, -\lvl)$) [bddnode] {$N_{1,2}$};
        \node (true) at ($(n12) + (-15bp, -30pt)$) [bddterminal] {$\true$};
        \node (false) at ($(n12) + (15bp, -30pt)$) [bddterminal] {$\false$};
        
        \begin{scope}[on background layer]
          \draw [-stealth] (c1) -- (n10);
          \draw [lowedge] (n10) -- (n11);
          \draw [lowedge] (n11) -- (n12);
          \draw [lowedge] (n12) -- (true);
          \draw [highedge] (n12) -- (false);
          
          \path[-] (n11) edge [bend right, highedge] node {} (true);
          \path[-] (n10) edge [bend left, highedge] node {} (false);
        \end{scope}
        \end{tikzpicture}
        \end{center}
    \end{minipage}
\end{center}

Let $P(t)$ be the statement, $S_{i,t} = \ite{ (g(\vec{x}) < t) \lor (t > 2^b-2^{i-1} - 1)}
                                            { \lsb{i}{g(\vec{x})} }
                                            { N_{i,t} } $.

$P(1)$ holds as the initial node of all $\bddrt{C}{i}$ is always $N_{i,0}$.

Assume $P(t)$ for an arbitrary $t \leq 2^b - 1$.

Consider going to the next node. If we are at a terminal node, then the state stays the same, otherwise we go to the low or high edge based on $x_t$. Thus, to advance state, we replace $N_{i,t}$ with $\text{if \quad } x_t \text{\quad then \quad} \high{N_{i,t}}\text{\quad else \quad} \low{N_{i,t}}$.
\begin{align*}
    S_{i,t + 1} {}={} 
        &\text{if \quad  } (g(\vec{x}) < t) \lor (t > 2^b-2^{i-1} - 1) \\
        &\text{then \quad} \mhl{\lsb{i}{g(\vec{x})}} \\
        &\text{else \quad} \mhl{\itep{ x_t }{ \high{N_{i,t}} }{ \low{N_{i,t}} }}
\end{align*}
We substitute for $\high{N_{i,t}}$ and $\low{N_{i,t}}$.
\begin{align*}
    S_{i,t + 1} {}={} 
        &\text{if \quad } (g(\vec{x}) < t) \lor (t > 2^b-2^{i-1} - 1) \\
        &\text{then \quad} \lsb{i}{g(\vec{x})} \\
        &\text{else \quad} \itep { x_t }{\mhl{ \lsb{i}{t} }}{\mhl{ \itep{t < 2^b - 2^{i - 1} - 1}{N_{i,t+1}}{1} }}
\end{align*}
$x_t \land \lnot (g(\vec{x}) < t)$ implies $g(\vec{x}) = t$.
\begin{align*}
    S_{i,t + 1} {}={} 
        &\text{if \quad } (g(\vec{x}) < t) \lor (t > 2^b-2^{i-1} - 1)\\
        &\text{then \quad} \lsb{i}{g(\vec{x})} \\
        &\text{else \quad} \itep { x_t }{ \lsb{i}{\mhl{ g(\vec{x}) }} }{ \itep{t < 2^b - 2^{i - 1} - 1}{N_{i,t+1}}{1} }
\end{align*}

Two branches are now equivalent ($\lsb{i}{g(\vec{x})}$) and can be combined.
\begin{align*}
    S_{i,t + 1} {}={} 
        &\text{if \quad } (g(\vec{x}) < t) \lor (t > 2^b-2^{i-1}-1) \mhl{ \lor x_t }\\
        &\text{then \quad} \lsb{i}{g(\vec{x})} \\
        &\text{else \quad} \mhl{ \itep{t < 2^b - 2^{i - 1} - 1}{N_{i,t+1}}{1} }
\end{align*}
$g(\vec{x}) < t \lor x_t$ is equivalent to $g(\vec{x}) < t + 1$.
\begin{align*}
    S_{i,t + 1} {}={} 
        &\text{if \quad } \mhl{ (g(\vec{x}) < t + 1 }) \lor (t > 2^b-2^{i-1}-1) \\
        &\text{then \quad} \lsb{i}{g(\vec{x})} \\
        &\text{else \quad} \itep{t < 2^b - 2^{i - 1} - 1}{N_{i,t+1}}{1}
\end{align*}

In the outer else, $t \leq 2^b-2^{i-1} - 1$ (by the condition). In the inner else, $t$ is also $\geq 2^b-2^{i-1} - 1$.
\begin{align*}
    S_{i,t + 1} {}={} 
        &\text{if \quad } (g(\vec{x}) < t + 1) \lor (t > 2^b-2^{i-1} - 1) \\
        &\text{then \quad} \lsb{i}{g(\vec{x})} \\
        &\text{else \quad} \itep{t \mhl{=} 2^b - 2^{i - 1} - 1}{N_{i,t+1}}{1}
\end{align*}

In the inner else, $g(\vec{x}) > t$ by the outer condition and $t = 2^b - 2^{i-1} - 1$ by the inner condition; together, $g(\vec{x}) > 2^b - 2^{i-1} - 1$. For all such $g(\vec{x})$, $\lsb{i}{g(\vec{x})} = 1$ (as $2^b - 1 - 2^{i - 1})$ is the largest number at most $2^b - 1$ with the $i^\text{th}$ bit set to 0). Thus, we can merge two more branches.
\begin{align*}
    S_{i,t + 1} {}={} 
        &\text{if \quad } (g(\vec{x}) < t + 1) \lor (t \mhl{ + 1 } > 2^b-2^{i-1} - 1)\\
        &\text{then \quad} \lsb{i}{g(\vec{x})} \\
        &\text{else \quad} \mhl{N_{i,t+1}}
\end{align*}

Thus $P(t) \to P(t + 1)$ and thus $P(t)$ holds for all $1 \leq t \leq 2^b - 1$.

Consider $P(2^b - 1)$.
\begin{align*}
S_{i,2^b - 1} &= \ite{ (g(\vec{x}) < 2^b - 1) \lor (2^b - 1 > 2^b-2^{i-1} - 1)}
                                            { \lsb{i}{g(\vec{x})} }
                                            { N_{i,2^b} } \\                             
    S_{i,2^b - 1} &=  \lsb{i}{g(\vec{x})}
\end{align*}

Therefore $\ftobdd{g}$ and $C$ are equivalent.
\end{proof}

\textbf{Proposition 1 (first half).}

\begin{proof}
Let $(B, \vec{p}) = \operatorname{encode_{CATEG}}(\Pr(V))$, where $\Pr(V)$ is a distribution over the integers $\{0, .., 2^b-1\}$. 
It follows directly from Lemma \ref{sbkstructure} that $\sum_{i=1}^{b} 2^b-2^{i-1} = b2^b - 2^b + 1$ decision nodes are needed for $B$.

\end{proof}

\subsubsection{Bitwise Encoding of Integer Distributions}

Recall Algorithm \ref{alg:categbits}. By expanding the algorithm, we obtain the following encoding method for integers, which again is similar to how it is implemented in practice. 
\begin{align*}
    \bddrt{B}{1} &= x_\varepsilon\\
    \bddrt{B}{2} &= \text{if } x_\varepsilon \text{ then } x_1 \text { else } x_0\\
    \bddrt{B}{3} &= \text{if } x_\varepsilon \text{ then } (\text{if } x_1 \text{ then } x_{11} \text{ else } x_{10}) \text{ else } (\text{if } x_0 \text{ then } x_{01} \text{ else } x_{00})\\
    \vdots
\end{align*}

Our $x_s$ are again derived from the algorithm; here, we recurse on the two halves of the input vector, and our probability $p_s$ is the relative weight of the latter half of the vector (corresponding to the larger numbers, with a value of 1 for the MSB). While in the written algorithm we conditionally added a power-of-two, what we are in essence doing is probabilistically setting the value of the most significant bit. The subscript $s$ refers to the sequence of more significant bits that have already been determined; for example, $x_\varepsilon$ is deciding the value of the most significant bit (given an empty string, for no prior sequence), while $x_{11}$ is deciding the third MSB, given that the first two MSBs are $11$. 

We formalize this encoding in Definition \ref{bitwise_int_formal}. Let $s_i$ denote the $i^\text{th}$ bit of a 1-indexed bit string or number; for example, $0010_3 = 1$. Let $|s|$ denote the length of a bit string. Let $\concat$ denote concatenation for bits and bit strings.

Note that we use two different types of subscripts: decision variables are subscripted by bit sequences for identification purposes, while a subscript $i$ on values such as numbers and bitstrings represents the $i^\text{th}$ bit of the value. 

\begin{definition}
For a distribution $\Pr(V)$ over the integers $\{0,..,2^b-1\}$ with, define $\operatorname{encode_{\BWH}}(\Pr(V)) = (B, \vec{p})$ such that for each bit string $s$, $0 \leq |s| < b$, 
 there is a decision variable $x_s$ with probability $p_s = \Pr(V_{|s|+1} \mid  \bigwedge\limits_{i=1}^{|s|} V = s_i)$.

Intuitively, the bits are chosen left-to-right, and each decision variable chooses the next bit given a bit string of past choices. Consider the following examples, where the empty bit string is denoted as $\varepsilon$.
\begin{align*}
    \Pr(x_\varepsilon) &= \Pr(V_{1})\\
    \Pr(x_0) &= \Pr(V_{2} \mid \lnot V_{1})\\
    \Pr(x_1) &= \Pr(V_{2} \mid V_{1})\\
    \Pr(x_{0100}) &= \Pr(V_{5} \mid \lnot V_{1} \land V_{2} \land \lnot V_{3} \land \lnot V_{4})
\end{align*}
 We specify 
$
    \bddrt{B}{i} = x_{\bddrt{B}{1} \concat \bddrt{B}{2} \concat \hdots \concat \bddrt{B}{i - 1}}
$
for $1 \leq i \leq b$.
For example, $\bddrt{B}{1} = x_\varepsilon$, $\bddrt{B}{2} = x_{\bddrt{B}{1}}$, $\bddrt{B}{3} = x_{\bddrt{B}{1} \concat \bddrt{B}{2}}$, and so on.

\label{bitwise_int_formal}
\end{definition}

\begin{lemma} \label{bwhencodes}
For any distribution $\Pr(V)$ over the integers $\{0, .., 2^b-1\}$, $\operatorname{encode_{\BWH}}(\Pr(V))$ encodes $\Pr(V)$.
\end{lemma}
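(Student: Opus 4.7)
The plan is to show that for each integer $v \in \{0, \dots, 2^b - 1\}$ with MSB-first binary expansion $v_1 v_2 \cdots v_b$, the encoded BDD outputs $v$ with probability exactly $\Pr(V = v)$. The argument has three ingredients: an inductive unwinding of the recursive root definition, the mutual independence of the decision variables, and the chain rule of conditional probability.

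First, I would unwind the definition $\bddrt{B}{i} = X_{\bddrt{B}{1} \concat \cdots \concat \bddrt{B}{i-1}}$ by induction on $i$ to show that the event $\{B(\vec{X}) = v\}$ coincides with the conjunction $\bigwedge_{i=1}^{b} \{X_{v_1 v_2 \cdots v_{i-1}} = v_i\}$. Indeed, $\bddrt{B}{1} = X_\varepsilon$, so forcing the first root to equal $v_1$ forces $X_\varepsilon = v_1$; conditional on this, $\bddrt{B}{2}$ simplifies to $X_{v_1}$, and requiring it to equal $v_2$ forces $X_{v_1} = v_2$; and so on down to level $b$. Because the $b$ bit strings $\varepsilon, v_1, v_1 v_2, \ldots, v_1 \cdots v_{b-1}$ have pairwise distinct lengths, they are distinct, and hence the $b$ decision variables appearing in the conjunction are distinct. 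Applying mutual independence then gives
$$\Pr(B(\vec{X}) = v) = \prod_{i=1}^{b} \Pr\!\left(X_{v_1 \cdots v_{i-1}} = v_i\right).$$

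Finally, by the definition of $p_s$, each factor equals $\Pr(V_i = v_i \mid V_1 = v_1, \dots, V_{i-1} = v_{i-1})$, and the product is precisely the chain-rule expansion of $\Pr(V_1 = v_1, \dots, V_b = v_b) = \Pr(V = v)$, concluding the argument. I do not expect a deep obstacle here: the content is essentially bookkeeping tied to the inductive definition of the roots. The one subtlety worth flagging is handling bit strings $s$ for which the conditioning event $\{V_1 = s_1, \dots, V_{|s|} = s_{|s|}\}$ has probability zero, so that $p_s$ is literally undefined. For these $s$ one may set $p_s$ arbitrarily (say $1/2$) without affecting the lemma, since any path that reaches such an $X_s$ must already carry an earlier factor of zero in the product and therefore contributes nothing. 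This convention can be introduced in a single sentence without disturbing the main calculation.
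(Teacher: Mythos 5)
Your proposal is correct and follows essentially the same route as the paper's proof: both identify the event $\{B(\vec{X})=v\}$ with a conjunction over distinct decision variables $X_{v_1\cdots v_{i-1}}$, factor by independence, and reassemble via the chain rule (the paper merely applies the chain rule first and then substitutes, rather than unwinding the roots first). Your extra remark about bit strings whose conditioning event has probability zero addresses a detail the paper's proof silently glosses over, and your resolution of it is sound.
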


\begin{proof}
Let $(B, \vec{p}) = \operatorname{encode_{\BWH}}(\Pr(V))$. We prove that for any possible assignment to bits $\vec{a} \in \{0, 1\}^b$, $\Pr(B(\vec{X}) = \vec{a}) = \Pr(\vec{V} = \vec{a})$.
\begin{align*} 
    \Pr(B(\vec{X}) = \vec{a}) &= \prod_{i=1}^b \Pr(B_i(\vec{X})) = a_i \mid \bigwedge_{j=1}^{i-1} B_j(\vec{X}) = a_j) && \text{(Chain rule of probability)}\\
    &= \prod_{i=1}^b \Pr(\mhl{X_{B_1(\vec{X})\concat \dots \concat B_{i-1}(\vec{X})}} = a_i \mid \bigwedge_{j=1}^{i-1} B_j(\vec{X}) = a_j) && \text{(Bit specification)}\\
    &= \prod_{i=1}^b \Pr(X_{\mhl{a_1 \concat \dots \concat a_{i-1}}} = a_i \mid \bigwedge_{j=1}^{i-1} B_j(\vec{X}) = a_j) && \text{(Condition)}\\
    &= \prod_{i=1}^b \Pr(X_{a_1\concat \dots \concat a_{i-1}} = a_i \mid \bigwedge_{j=1}^{i-1} \mhl{X_{B(\vec{X})_1\concat \dots \concat B_{j-1}(\vec{X})}} = a_j) && \text{(Bit specification)}\\
    &= \prod_{i=1}^b \Pr(X_{a_1\concat \dots \concat a_{i-1}} = a_i) && \text{(Independence)}\\
    &= \prod_{i=1}^b \Pr(V_i = a_i \mid \bigwedge_{j=1}^{i-1} V_j=a_j) && \text{(As $\Pr(X_s) = p_s$)}\\
    &= \Pr(V = \vec{a}) && \text{(Chain rule of probability)}
\end{align*}
\end{proof}

\subsubsection{BDD Size of the BITWISE\_INT  encoding}

We prove the second half of Proposition 1.
\begin{propositionhalf}{1 (second half)} \label{bwhsize}
A discrete distribution over the integers $\{0, 1 \ldots, 2^b - 1\}$ compiles to a BDD of size $\Theta(2^b)$ when represented using \BWH ~(Algorithm \ref{alg:categbits}), with variables in flip evaluation order.
\end{propositionhalf}

\begin{lemma} \label{bwhstructure}
Let $\operatorname{encode_{\BWH}}(\Pr(V)) = (B, P)$, where $\Pr(V)$ is the distribution over the integers $\{0,...,2^b-1\}$.

$B$ will be exactly the BDD $C$ with the following structure:

For each root $1 \leq i \leq b$, for all bit strings $s$ of length less than $i$, let $C$ have a node $N_{i,s}$ corresponding to decision variable $x_s$.
\begin{align*}
    \low{N_{i, s}} &= \ite{ |s| = i - 1 }{ 0 }{N_{i, s\concat 0}}\\
    \high{N_{i, s}} &= \ite{|s| = i - 1} {1}{N_{i, s\concat 1}}
\end{align*}
The roots of the BDD $C_i$ each point to the corresponding node $N_{i,\varepsilon}$.

\end{lemma}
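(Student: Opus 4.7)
The plan mirrors the proof of Lemma~\ref{sbkstructure}: first argue that the candidate multi-rooted BDD $C$ is reduced -- hence canonical for the Boolean function tuple it computes -- and then verify that $C$ computes the same function tuple as the BDD $B$ from Definition~\ref{bitwise_int_formal}. Canonicity of reduced BDDs in the fixed flip-evaluation variable ordering then forces $B = C$.

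For reducedness I would check the two standard conditions. First, no node $N_{i,s}$ has $\low{N_{i,s}} = \high{N_{i,s}}$: when $|s| = i-1$ the children are the distinct terminals $0$ and $1$, and when $|s| < i-1$ they are the decision nodes $N_{i, s \concat 0}$ and $N_{i, s \concat 1}$, whose root decision variables $x_{s\concat 0}$ and $x_{s\concat 1}$ are distinct. Second, no two distinct nodes in $C$ share the same variable-low-high triple. Nodes indexed by different strings $s$ have different decision variables and are automatically distinct, so the only case I need to rule out is a collision between $N_{i,s}$ and $N_{j,s}$ for some $i < j$ sharing variable $x_s$. The cleanest argument is by variable support: the subtree rooted at $N_{i,s}$ is a complete if-then-else tree whose computed subfunction genuinely depends exactly on the variables $\{x_{s \concat w} : |w| < i - |s|\}$, while the subtree at $N_{j,s}$ genuinely depends on the strictly larger set $\{x_{s \concat w} : |w| < j - |s|\}$. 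Since the two subfunctions have different supports they are distinct, so the nodes cannot be merged.

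With canonicity established, it suffices to show $\bddrt{C}{i} = \bddrt{B}{i}$ for every $\vec{x}$ and each $1 \le i \le b$. Following the state-trace style of Lemma~\ref{sbkstructure}, I would maintain, by induction on the number of transitions $k$ taken inside $C_i$, the invariant that the current node equals $N_{i, s_k}$ where $s_k = \bddrt{B}{1} \concat \cdots \concat \bddrt{B}{k}$. For the inductive step, at $N_{i, s_k}$ the decision variable is $x_{s_k}$, and the self-referential specification $\bddrt{B}{k+1} = x_{\bddrt{B}{1} \concat \cdots \concat \bddrt{B}{k}}$ from Definition~\ref{bitwise_int_formal} yields $x_{s_k} = \bddrt{B}{k+1}$, so following the appropriate edge lands at $N_{i, s_k \concat \bddrt{B}{k+1}} = N_{i, s_{k+1}}$. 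After $i-1$ transitions we reach $N_{i, s_{i-1}}$, whose terminal children output $x_{s_{i-1}} = \bddrt{B}{i}$, exactly as required.

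The main obstacle I expect is the second reducedness condition -- ruling out $N_{i,s} \simeq N_{j,s}$ across different roots for the same decision variable -- since naively the repeated appearance of $x_s$ in multiple roots might look like an opportunity for merging. The support argument above resolves this cleanly by identifying each candidate node with the exact set of variables it genuinely depends on. Once reducedness and functional equivalence are in hand, Lemma~\ref{bwhstructure} follows immediately from canonicity.
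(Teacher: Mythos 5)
Your proposal is correct and follows essentially the same route as the paper's proof: establish that the candidate multi-rooted BDD $C$ is reduced (hence canonical under the flip-evaluation order), then show functional equivalence with $B$ by an inductive state-trace down each root, using $\bddrt{B}{t+1} = x_{\bddrt{B}{1}\concat\cdots\concat\bddrt{B}{t}}$ to advance the invariant. Your explicit variable-support argument for why $N_{i,s}$ and $N_{j,s}$ cannot merge is a slightly more careful rendering of the paper's observation that the trees have different depths and resolve on different final decision variables, but it is the same underlying idea.
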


Intuitively, for each root corresponding to bit $i$ we consider up to a decision node of depth $i$ (corresponding to a prefix bitstring of length $i-1$); if we have not yet reached that depth, we instead go to a deeper decision node, with the appropriate new prefix.

As an example, the BDD for $b = 2$ follows (terminal nodes visually duplicated for clarity).
\vspace{-10pt}

\begin{center}
    \begin{minipage}[t]{0.15\textwidth}
    \vspace{0pt}
        \begin{center}
        \begin{tikzpicture}
          \def\lvl{30pt}
          \def\offset{15bp}
            \node (c1) at (0, 0) [root] {$C_1$};
            \node (n1e) at ($(c1) + (0bp, -30pt)$) [bddnode] {$N_{1,\varepsilon}$};
            \node (true) at ($(n1e) + (\offset, -\lvl-30pt)$) [bddterminal] {$\true$};
            \node (false) at ($(n1e) + (-\offset, -\lvl-30pt)$) [bddterminal] {$\false$};
            
            \begin{scope}[on background layer]
              \draw [-stealth] (c1) -- (n1e);
              \draw [lowedge] (n1e) -- (false);
              \draw [highedge] (n1e) -- (true);
        \end{scope}
        \end{tikzpicture}
        \end{center}
    \end{minipage}
    \begin{minipage}[t]{0.15\textwidth}
    \vspace{0pt}
    \begin{tikzpicture}
      \def\lvl{30pt}
      \def\offset{15bp}
        \node (c2) at (0, 0) [root] {$C_2$};
        \node (n2e) at ($(c2) + (0bp, -30pt)$) [bddnode] {$N_{2,\varepsilon}$};
        \node (n20) at ($(n2e) + (-\offset, -\lvl)$) [bddnode] {$N_{2,0}$};
        \node (n21) at ($(n2e) + (\offset, -\lvl)$) [bddnode] {$N_{2,1}$};
        \node (true) at ($(n21) + (0bp, -30pt)$) [bddterminal] {$\true$};
        \node (false) at ($(n20) + (0bp, -30pt)$) [bddterminal] {$\false$};
        
        \begin{scope}[on background layer]
          \draw [-stealth] (c2) -- (n2e);
          \draw [lowedge] (n2e) -- (n20);
          \draw [lowedge] (n20) -- (false);
          \draw [lowedge] (n21) -- (false);
          \draw [highedge] (n2e) -- (n21);
          \draw [highedge] (n20) -- (true);
          \draw [highedge] (n21) -- (true);
    \end{scope}
    \end{tikzpicture}
    \end{minipage}
\end{center}

\begin{proof}

Note that the BDD described has variable order following the evaluation order from Algorithm \ref{alg:categbits}; the direct descendant of a node corresponding to the decision variable $x_s$ will either correspond to the decision variable $x_{s \concat 1}$ or $x_{s \concat 0}$, both of which are later in the evaluation order. As this holds for all nodes, the variable order must also follow globally. 

We use a similar argument to that in the proof of Lemma \ref{sbkstructure}. 

We first argue that the BDD is reduced. The argument is along the same line: each root now points to a tree deciding the value of the bit. No node has isomorphic children, as the two children correspond to either (necessarily different) terminal nodes, or decision nodes corresponding to different variables. In addition, as before each tree is of a different depth, corresponding to a different final decision variable, and so no two nodes on the same decision variable can compute the same subfunction. Therefore, we cannot merge nodes.

It remains to show that the function described by the BDD $C$ is equivalent to that from the encoding $B$. 

Let $S_{i,j}(\vec{x})$ denote the state of $C_i$ at the level of the first decision variable whose bit string is of length $j$. See the following BDD and corresponding states as an example.
\begin{center}
    \begin{minipage}{0.3\textwidth}
    \begin{center}
        \begin{alignat*}{4}
        S_{2,0}(&x_\varepsilon,x_0,x_1) &&= N_{2,\varepsilon}\\
        S_{2,1}(&0, x_0, x_1) &&= N_{2,0}\\
        S_{2,1}(&1, x_0, x_1) &&= N_{2,1}\\
        S_{2,2}(&0, x_0, x_1) &&= x_0\\
        S_{2,2}(&1, x_0, x_1) &&= x_1
        \end{alignat*}
    \end{center}
    \end{minipage}
    \begin{minipage}{0.4\textwidth}
    \begin{center}
    \begin{tikzpicture}
      \def\lvl{30pt}
      \def\offset{15bp}
        \node (c2) at (0, 0) [root] {$C_2$};
        \node (n2e) at ($(c2) + (0bp, -30pt)$) [bddnode] {$N_{2,\varepsilon}$};
        \node (n20) at ($(n2e) + (-\offset, -\lvl)$) [bddnode] {$N_{2,0}$};
        \node (n21) at ($(n2e) + (\offset, -\lvl)$) [bddnode] {$N_{2,1}$};
        \node (true) at ($(n21) + (0bp, -30pt)$) [bddterminal] {$\true$};
        \node (false) at ($(n20) + (0bp, -30pt)$) [bddterminal] {$\false$};
        
        \begin{scope}[on background layer]
          \draw [-stealth] (c2) -- (n2e);
          \draw [lowedge] (n2e) -- (n20);
          \draw [lowedge] (n20) -- (false);
          \draw [lowedge] (n21) -- (false);
          \draw [highedge] (n2e) -- (n21);
          \draw [highedge] (n20) -- (true);
          \draw [highedge] (n21) -- (true);
    \end{scope}
    \end{tikzpicture}
    \end{center}
    \end{minipage}
\end{center}

Let $P(t)$ be the statement, $S_{i,t}(\vec{x}) =  \ite{ t \geq i }{ \bddrt{B}{i} }{ N_{i,\bddrt{B}{1} \concat \dots \concat \bddrt{B}{t}}}$.

$P(0)$ holds as there is no bit index greater than or equal to 0 and $S_{i,0}(\vec{x}) = N_{i,\varepsilon}$ by the placement of the roots.

Assume $P(t)$, which specifies $S_{i, t}$. Consider advancing to the next node in the BDD based on the assignment to decision variables (which does nothing if we are already at a terminal node):
\begin{align*}
    \operatorname{next(S_{i,t})} {}={} 
        &\text{if \quad } t \geq i\\
        &\text{then \quad}
            \bddrt{B}{i}\\
        &\text{else \quad} \itep
        { x_{\bddrt{B}{1} \concat \dots \concat \bddrt{B}{t}} } 
        { \high{N_{i,\bddrt{B}{1} \concat \dots \concat \bddrt{B}{t}}} }
        { \low{N_{i,\bddrt{B}{1} \concat \dots \concat \bddrt{B}{t}}} }
\end{align*}

We replace $x_{\bddrt{B}{1} \concat \dots \concat \bddrt{B}{t}}$ with $\bddrt{B}{t + 1}$.
\begin{align*}
    \operatorname{next(S_{i,t})} {}={} 
        &\text{if \quad } t \geq i\\
        &\text{then \quad}
            \bddrt{B}{i}\\
        &\text{else \quad} \itep{ \bddrt{B}{t + 1} }
            { \high{N_{i,\bddrt{B}{1} \concat \dots \concat \bddrt{B}{t}}} }
            { \low{N_{i,\bddrt{B}{1} \concat \dots \concat \bddrt{B}{t}}} }
\end{align*}

We replace the high and low edges by the definition:
\begin{align*}
    \operatorname{next(S_{i,t})} {}={} 
        &\text{if \quad } t \geq i &&\\
        &\text{then \quad} \bddrt{B}{i} &&\\
        &\text{else \quad} (&&\text{if \quad } \bddrt{B}{t + 1}\\
            & &&\text{then \quad}
                \left(
                \text{if \quad } t = i - 1 \text{\quad then \quad} 1 \text{\quad else \qquad} N_{i, \bddrt{B}{1} \concat \dots \concat \bddrt{B}{t}\concat 1}
                \right)\\
            & &&\text{else \quad}
                \left(
                \text{if \quad } t = i - 1 \text{\quad then \quad} 0 \text{\quad else \qquad} N_{i, \bddrt{B}{1} \concat \dots \concat \bddrt{B}{t}\concat 0}
                \right)
            )
\end{align*}

We rearrange the if conditions:
\begin{align*}
    \operatorname{next(S_{i,t})} {}={} 
        &\text{if \quad } t \geq i &&\\
        &\text{then \quad} \bddrt{B}{i} &&\\
        &\text{else \quad} (&&\text{if \quad } t=i-1\\
            & &&\text{then \quad}
                \left(
                \text{if \quad } \bddrt{B}{t+1} \text{\quad then \quad} 1 \text{\quad else \qquad} 0
                \right)\\
            & &&\text{else \quad}
                \left(
                \text{if \quad } \bddrt{B}{t+1} \text{\quad then \quad} N_{i, \bddrt{B}{1} \concat \dots \concat \bddrt{B}{t} \concat 1} \text{\quad else \qquad} N_{i, \bddrt{B}{1} \concat \dots \concat \bddrt{B}{t} \concat 0}
                \right)
            )
\end{align*}
\begin{align*}
    \operatorname{next(S_{i,t})} {}={} 
        &\text{if \quad } t \geq i &&\\
        &\text{then \quad} \bddrt{B}{i} &&\\
        &\text{else \quad} (&&\text{if \quad } t=i-1\quad\text{then \quad}
                \bddrt{B}{t+1}\quad\text{else \quad}
                N_{i, \bddrt{B}{1} \concat \dots \concat \bddrt{B}{t+1}}
            )
\end{align*}

The first two $\operatorname{then}$ branches collapse:
\begin{align*}
    \operatorname{next(S_{i,t})} {}={} 
        &\text{if \quad } t + 1 \geq i  
        \quad\text{then \quad} \bddrt{B}{i}  
        \quad\text{else \quad}
                N_{i, \bddrt{B}{1} \concat \dots \concat \bddrt{B}{t+1}}
\end{align*}

As the only node we now reach has bit string length $t+1$, $\operatorname{next}(S_{i,t}) = S_{i,t+1}$. Therefore $P(t) \to P(t+1)$.

Consider $P(b-1)$: $S_{i,b-1} =  \text{if \quad } b-1 \geq i \text{\quad then \quad} \bddrt{B}{i} \text{\quad else \quad} N_{i,\bddrt{B}{1} \concat \dots \concat \bddrt{B}{b-1}}$.

For all $i \in \{1, \dots, b-1\}$, we see that the root labeled $\bddrt{C}{i}$ reaches the value $\bddrt{B}{i}$. For $i = b$, the state reaches $N_{n, \bddrt{B}{1}\concat \dots \concat \bddrt{B}{b-1}}$, whose high and low edges are 1 and 0, respectively. The last step goes to $\itep{ x_{b_1\concat \dots \concat b_{b-1}} }{ 1 }{ 0 } = \bddrt{B}{b}$. Therefore $B$ and $C$ are equivalent.
\end{proof}

\textbf{Proposition 1 (second half).}

\begin{proof}

Let $\operatorname{encode_{\BWH}}(\Pr(V)) = (B, \vec{p})$, where $\Pr(V)$ is the distribution over the integers $\{0,...,2^b-1\}$. 
It directly follows from Lemma \ref{bwhstructure} that $B$ requires the following number of decision nodes.
\begin{gather*}
    \sum_{i=1}^b (\text{\# of bit strings of length less than $i$}) = 2^{b+1}-b-2
\end{gather*}
\end{proof}

\subsection{Encoding Uniform Integer Distributions}

\begin{propositionhalf}{2}\label{uniform}
$\forall n > 0$, $\Pr($UNIFORM$(n) = i) = \frac{1}{n}$ for $0 \leq i < n$. 
\end{propositionhalf}
\begin{proof}
We show this by strong induction on $n$.
When $n=1$ or $n=2$, this result trivially holds.
Consider $n=k$. Let $0 \leq i < k$, and $b = \lfloor \log_2(k) \rfloor$ as in the algorithm. 

\textit{Case 1:} $0 \leq i < 2^b$.
UNIFORM$(k) = i$ requires us to take the first branch of the if statement. The conditional addition is equivalent to setting the $b$ least significant bits of the number to $1$ with probability $\frac{1}{2}$ and $0$ with probability $\frac{1}{2}$. As each $0 \leq i < 2^b$ corresponds to a single unique binary string of length $b$, the probability of the bit sequence being equal to the wanted number is simply $(\frac{1}{2})^b$. Therefore, $\Pr($UNIFORM$(k) = i) = (\frac{1}{2})^b (\frac{2^b}{k}) =  \frac{1}{k}$.

\textit{Case 2:} $2^b \leq i < k$.
UNIFORM$(k) = i$ requires us to take the second branch of the if statement. Using our inductive hypothesis (as $k-2^b < k$), 
$\Pr($UNIFORM$(k) = i) = (\frac{k-2^b}{k})\Pr($UNIFORM$(k-2^b) = i-2^b) = (\frac{k-2^b}{k})(\frac{1}{k-2^b}) = \frac{1}{k}$, as wanted. 

\end{proof}

\section{Experiments}

\paragraph{Experimental Details}
All experiments were run on a server with a 2.20GHz CPU and 504GB RAM. WebPPL experiments were run on WebPPL v0.9.15; Psi experiments were run on Psi version \texttt{ec2cfc14a62a168afe7ce1d7269b92cf2882b830}. \verb#Dice.jl# experiments were run using the code available at \url{https://github.com/Juice-jl/Dice.jl/tree/arithmetic}. The implementations of each model run are available at the same repository.

\paragraph{Benchmark Model Descriptions}
We provide a small description of our benchmarks in this section with details of their sources. The code implementations of all models are available at the repository \url{https://github.com/Juice-jl/Dice.jl/tree/arithmetic}. 

\begin{enumerate}
    \item book: A model of flipping towards a target page in a book adapted from the Psi test directory~\citep{PsiDirectory}. 
    
    \item tugofwar: Adapted from a traditional tug-of-war example~\citep{huang2021aqua}, with values made discrete.
    \item caesar: The caesar-cipher example from Dice~\citep{HoltzenOOPSLA20}, with a different number of characters being observed. 

    \item ranking: A model for learning a ranking system, adapted from \cite{KisaLTPM14}.

\item radar1: A model of radar reception, adapted from a continuous model from Psi's~\citep{gehr2016psi} benchmark suite.

\item floydwarshall: An implementation of the Floyd-Warshall algorithm~\citep{10.1145/367766.368168} on a graph with edges of random weight. 

\item linear-extensions: A model counting linear extensions~\citep{https://doi.org/10.48550/arxiv.1802.06312} where we observe a partial order and get an output distribution over all matching total orders.

\item triangle: A model categorizing a triangle of random side lengths, adapted from the Psi test directory~\citep{PsiDirectory}.

\item gcd: A model checking if two random numbers are coprime implementing Euclid's algorithm~\citep{doi:10.1080/00029890.1938.11990797}.

\item disease: A discrete disease model taken from existing works~\citep{10.1007/978-3-030-44914-8_14}.

\item luhn: A probabilistic model of student IDs leveraging the Luhn algorithm~\citep{luhn}. 
\end{enumerate}

\paragraph{Additional Experimental Results}
We provide additional experimental results supplementing those in the main paper. 

BDD size serves as a proxy for how well knowledge compilation can exploit structure: the more compact the BDD, the smaller the representation of our function, and the faster weighted model counting can be executed. Table \ref{bddsizes} compares the resultant BDD size for various models when compiled in Dice.jl using a binary or one-hot encoding. We can see that in almost all cases the binary encoding results in a smaller BDD, in some cases much smaller. Note that these models are those for which the one-hot encoding did not timeout; it is likely for those models that timed out that the true compiled BDD size will end up being much larger than the binary encoded BDD size. 

\begin{table}[ht]
\centering
\caption{BDD sizes for probabilistic models using a binary vs one-hot encoding}\label{bddsizes}
\scriptsize
\begin{tabular}{lrr}\toprule
Benchmarks              &binary & one-hot  \\\midrule
tugofwar                &\textbf{2400}    &2821\\\cmidrule{1-3}
caesar-small            &\textbf{1304}    &3879 \\
caesar-medium           &\textbf{6344}             &17879  \\
caesar-large            &\textbf{12644}             & 35379 \\\cmidrule{1-3}
ranking-small           &\textbf{691 }   &1146  \\
ranking-medium          &\textbf{6218}     & 8297 \\
ranking-large           &\textbf{11491 }    &15680\\\cmidrule{1-3}
radar1                  &\textbf{181}     &332 \\\cmidrule{1-3}
floydwarshall-small     &\textbf{10}    &\textbf{10}  \\
floydwarshall-medium    &341    &\textbf{237} \\\cmidrule{1-3}
linear extensions-small &\textbf{29}     & 53\\
linear extensions-medium &\textbf{133}   & 257 \\
linear extensions-large &\textbf{997}    & 1538 \\\cmidrule{1-3}
triangle-small          &\textbf{10273 }   &150089 \\
triangle-medium         &\textbf{40419 }    & 1156785\\\cmidrule{1-3}
luhn-small              &\textbf{518 }   &1010 \\
luhn-medium             &\textbf{2899 }   &7361  \\
\bottomrule
\end{tabular}
\end{table}

\section{Beta Prior Application: Bayesian Network Parameter Learning}

One natural application of a Beta prior is as a prior distribution for learning the parameters of a (binary) Bayesian network. The task of Bayesian network parameter learning can be described as follows: given a set of data consisting of instantiations of network variables, we want to find the network parameters maximizing the probability of this data. One interesting case is when our data is incomplete; that is, there are some variables not given a value. In this setting, a Bayesian approach to parameter learning must consider all (exponentially many) possible instantiations of these missing values ~\citep{darwiche_2009}.

Using our Beta prior implementation described in the main paper, this setting can naturally be modeled within \texttt{Dice.jl}.. A Bayesian network can be expressed in the probabilistic program with Beta priors on each network parameter; a dataset can then be observed. By returning the distribution over our Beta parameters $\alpha$ and $\beta$, we obtain our posterior, a mixture over Beta distributions. These can then be manually combined to obtain the exact posterior density function.

We provide a brief example of this on the survey Bayesian network\footnote{https://www.bnlearn.com/bnrepository/}. The structure of this network is shown in Figure \ref{survey_example}; we simplify the network by making all variables binary so that the Beta is a suitable prior. We note that we can actually generalize to the non-binary case with a Dirichlet prior using an approach similar to that used for the Beta. We also provide an example missing dataset for the network; the the entry ? indicates a missing value. For this example, we focus on the specific parameter $\theta_{o|e} = \Pr(O=1|E=1)$, for which we give a uniform prior $\Beta(1, 1)$.

\begin{figure}[h]
\begin{minipage}{.5\textwidth}
\centering
    \begin{tikzpicture}[node distance=0.75cm, baseline=-1em]

      \node (A) at (0, 0) [bddnode] {\textsc{a}};
      \node (S) at ($(A) + (40bp, 0)$) [bddnode] {\textsc{s}};
      \node (E) at ($(A) + (20bp, -20bp)$) [bddnode] {\textsc{e}};
      \node (O) at ($(E) + (-20bp, -20bp)$) [bddnode] {\textsc{o}};
      \node (R) at ($(O) + (40bp, 0)$) [bddnode] {\textsc{r}};
      \node (T) at ($(R) + (-20bp, -20bp)$) [bddnode] {\textsc{t}};

      \draw[->] (A) -- (E);
      \draw[->] (S) -- (E);
      \draw[->] (E) -- (O);
      \draw[->] (E) -- (R);
      \draw[->] (O) -- (T);
      \draw[->] (R) -- (T);

    \end{tikzpicture}
\end{minipage}
\begin{minipage}{.5\textwidth}

    \begin{tabular}{c|c|c|c|c|c}
        A & S & E & O & R & T \\
        \hline
        1 & ? & ? & 1 & 1 & 1 \\
        1 & 0 & ? & 1 & 0 & 1 \\
        1 & 0 & 1 & ? & 0 & 1 \\
        1 & ? & 1 & 0 & ? & 1 \\
        0 & 0 & 1 & 1 & ? & 1 \\
        0 & 1 & ? & 1 & 1 & ? \\
        1 & ? & 0 & 0 & 1 & 0 \\
        0 & ? & 1 & ? & ? & ? \\
        1 & 1 & 1 & ? & 1 & ? \\
        1 & 0 & 0 & ? & 1 & 1 \\
        \hline
    \end{tabular}
\end{minipage}
\caption{Visualization of and example missing dataset for the survey Bayesian network.}
\label{survey_example}
\end{figure}

By running the program representing this task, we get a large output distribution over Beta parameters - a mixture over Beta distributions. Note that it does not contain as many entries as possible instantiations, as some complete variable instantiations result in the same posterior Beta. If we plot the corresponding mixture, we can get a posterior PDF --- note that this is an exact posterior recovered from the Beta parameters, in contrast to the approximate result one would get from sampling-based inference methods. The output and PDF visualization are given in Figure \ref{fig:posterior}.

\begin{figure}[tb]
    \begin{minipage}{.4\textwidth}
    \centering
       \begin{tabular}{c|c|c}
        $\alpha$ & $\beta$ & Pr(.)  \\
        \hline
        9 & 3 & 0.226\\
        8 & 4 & 0.207\\
        10 & 2 & 0.177 \\
        7 & 5 & 0.160 \\
        6 & 6 & 0.109 \\
        5 & 7 & 0.066 \\
        4 & 8 & 0.035 \\
        3 & 9 & 0.016 \\
        2 & 10 & 0.005 \\
        \hline
    \end{tabular}\\
    \end{minipage}
    \begin{minipage}{.6\textwidth}
        \includegraphics[scale=0.35]{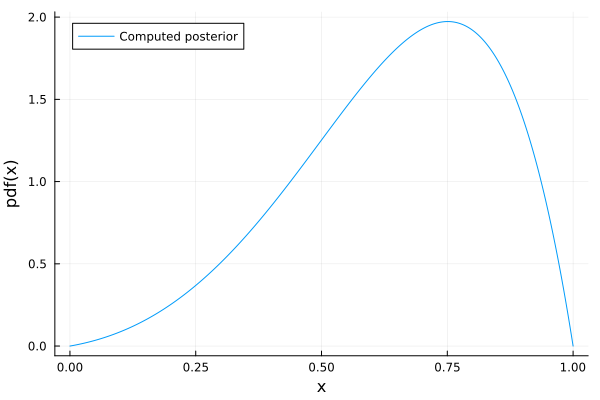}
    \end{minipage}

    \caption{Output posterior distribution and visualization. The output represents a mixture over Beta distributions, represented by their parameters $\alpha$ and $\beta$.}
    \label{fig:posterior}
\end{figure}

The code used for this example is available in the same \texttt{Dice.jl} repository (\url{https://github.com/Juice-jl/Dice.jl/tree/arithmetic}).

\end{document}